\documentclass[10pt,twocolumn]{IEEEtran}
\IEEEoverridecommandlockouts

\usepackage{cite}
\usepackage{amsmath}
\usepackage{amsfonts}
\usepackage{amssymb}
\usepackage{algorithm}
\usepackage{algpseudocode}
\usepackage{textcomp}
\usepackage{graphicx}
\usepackage{url}
\usepackage{comment}
\usepackage{booktabs}
\usepackage{subfig}
\usepackage{xcolor}

\usepackage{mathtools}

\DeclarePairedDelimiter{\diagfences}{(}{)}

\newcommand{\vectorform}{\operatorname{vectorform}\diagfences}
\newcommand{\matrixform}{\operatorname{matrixform}\diagfences}

\newcommand{\diag}{\operatorname{diag}\diagfences}
\newcommand{\tr}{\operatorname{tr}\diagfences}

\DeclareMathOperator*{\argmin}{argmin}

\newcommand{\ts}{\textsuperscript}

\newtheorem{theorem}{Theorem}[section]
\newtheorem{lemma}[theorem]{Lemma}

\newenvironment{proof}[1][Proof]{\begin{trivlist}
		\item[\hskip \labelsep {\bfseries #1}]}{\end{trivlist}}

\newcommand{\qed}{\nobreak \ifvmode \relax \else
	\ifdim\lastskip<1.5em \hskip-\lastskip
	\hskip1.5em plus0em minus0.5em \fi \nobreak
	\vrule height0.75em width0.5em depth0.25em\fi}

\def\BibTeX{{\rm B\kern-.05em{\sc i\kern-.025em b}\kern-.08em
    T\kern-.1667em\lower.7ex\hbox{E}\kern-.125emX}}

\begin{document}

\title{Learning an Interpretable Graph Structure in Multi-Task Learning}

\author{Shujian~Yu, Francesco Alesiani, Ammar Shaker and Wenzhe Yin%
\thanks{Shujian Yu, Francesco Alesiani and Ammar Shaker are with the NEC Laboratories Europe, $69115$ Heidelberg,
  Germany. (email: \{Shujian.Yu,Francesco.Alesiani,Ammar.Shaker\}@neclab.eu)}%
\thanks{Wenzhe Yin is with the Heidelberg University, $69117$ Heidelberg, Germany.
(email: Wenzhe.Yin@stud.uni-heidelberg.de)}}%

\maketitle

\begin{abstract}
We present a novel methodology to jointly perform multi-task learning and infer intrinsic relationship among tasks by an interpretable and sparse graph. Unlike existing multi-task learning methodologies, the graph structure is not assumed to be known a priori or estimated separately in a preprocessing step. Instead, our graph is learned simultaneously with model parameters of each task, thus it reflects the critical relationship among tasks in the specific prediction problem. We characterize graph structure with its weighted adjacency matrix and show that the overall objective can be optimized alternatively until convergence. We also show that our methodology can be simply extended to a nonlinear form by being embedded into a multi-head radial basis function network (RBFN). Extensive experiments, against six state-of-the-art methodologies, on both synthetic data and real-world applications suggest that our methodology is able to reduce generalization error, and, at the same time, reveal a sparse graph over tasks that is much easier to interpret.
\end{abstract}

\begin{IEEEkeywords}
Multi-task Learning, Graph Structure Learning, Interpretability, Radial Basis Function Network.
\end{IEEEkeywords}

\graphicspath{{figures/}}

\section{Introduction}
Multi-task learning is a subfield of machine learning in which individual models for performing potentially related tasks are learned jointly~\cite{zhang2017overview}. The advantages of multi-task learning are especially pronounced in situations where there is strong correlation between information-rich tasks and information-poor tasks~\cite{zhang2010convex}. By borrowing strength across tasks, it may be possible to reduce the overall generalization error. With this characteristic, multi-task learning has been used successfully across all applications of machine learning, from speech, natural language processing to computer vision~\cite{ruder2017overview}.

Since multi-task learning aims to improve the performance of a task with the help of other related tasks, a central problem is to accurately characterize relationship among multiple tasks. When structure about multiple tasks is available, e.g., task-specific descriptors~\cite{bonilla2007kernel} or a task similarity graph~\cite{evgeniou2004regularized,evgeniou2005learning}, one can impose regularizations into the learning formulation to penalize hypotheses that are not consistent with the given structure. However, in real-world scenarios, the structure information is always unavailable or hard to be obtained.

Modeling task relationship with a task covariance matrix $C$  or a task precision matrix $P$ ($a.k.a.$, the inverse of $C$) is a common strategy for existing multi-task structure learning methodologies (e.g.,~\cite{zhang2010convex,gonccalves2016multi,goncalves2019bayesian,zhao2017efficient,ciliberto2015convex}). Although either $P$ or $C$ carries partial correlation between pairwise tasks, there is no guarantee that those matrices can be transformed into a valid graph Laplacian~\cite{dong2016learning}.

Apart from a few early attempts (e.g.,~\cite{jacob2009clustered,zhou2011clustered}) that infer task relationship with disjoint clusters or subgroups, there has been very limited work on the joint learning of multiple tasks and a concrete data structure (e.g., graph, tree) across tasks~\cite{han2015learning}. One possible reason is that the problem of learning of a valid graph from observation data alone is still a challenging problem in both signal processing and machine learning communities~\cite{dong2016learning,chepuri2017learning,shen2017kernel}. Despite such difficulty, a graph structure across tasks improves the model interpretability and also enables many other downstream applications, such as the identification of outlier tasks and the visualization of task topology~\cite{zhang2017overview}.

In this paper, we propose a novel methodology for simultaneously learning of model parameters in each task and a sparse graph structure over tasks. Specifically, instead of learning a task covariance or precision matrix,
we resort to learn a weighted adjacency matrix $A$ to characterize a valid graph. We show how to integrate the learning of a weighted adjacency matrix and the learning of model parameters in each task to form a joint objective. We also show how this joint objective can be optimized alternatively. We then show our methodology can be seamlessly embedded into a multi-head radial basis function network (RBFN) to form a nonlinear model. We finally perform experiments to demonstrate the superiority of our methodology over other state-of-the-art (SOTA) ones.

\emph{Notation}: We use lowercase letters (e.g., $y$) for scalars, lowercase bold letters (e.g., $\mathbf{x}$) for vectors, and uppercase letters for matrix (e.g., $W$). $S^T$ denotes the space of symmetric $T\times T$-matrices, $S_+^T$ stands for the cone of symmetric $T\times T$-positive semidefinite matrices, and $\mathbf{M}_{d,T}$ is the space of real $d \times T$-matrices.

\section{Background Knowledge}
\subsection{Problem Formulation}

Suppose we are given $T$ learning tasks, where in each task we have access to a training set $\mathcal{D}_t$ with $N_t$ data instances $\{(\mathbf{x}_t^i,y_t^i):i=1,\cdots,N_t, t=1,\cdots,T\}$. In this work, we focus on the regression setting where $\mathbf{x}_t^i\in\mathcal{X}_t\subseteq\mathbb{R}^d$ and $y_t^i\in\mathbb{R}$. These tasks may be viewed as drawn from an unknown joint distribution of tasks, which is the source of the bias that relates the tasks. Multi-task learning aims to learn from each training set $\mathcal{D}_t$ a prediction model $f_t(\mathbf{w}_t,\cdot):\mathcal{X}_t\rightarrow\mathbb{R}$ with parameter $\mathbf{w}_t$ such that the task relationship is taken into consideration and the overall generalization error is small.


In what follows, we first assume a linear model in each task, i.e., $f_t(\mathbf{w}_t,\mathbf{x})=\mathbf{w}_t^T\mathbf{x}$. We will then discuss its nonlinear extension with the form $f_t(\mathbf{w}_t,\mathbf{x})=\mathbf{w}_t^Tg(\mathbf{x};\mathbf{\theta})$, where $g(\mathbf{x};\mathbf{\theta}):\mathbb{R}^d\rightarrow\mathbb{R}^p$ denotes a neural network with learnable parameter $\mathbf{\theta}$ that defines a nonlinear transformation of the input from $\mathbb{R}^d$ to $\mathbb{R}^p$.

\subsection{Related Work}

The joint learning of multiple tasks and their structure was initiated in  Multi-Task Gaussian Process (MTGP) prediction~\cite{bonilla2008multi} and Multi-Task Relationship Learning (MTRL)~\cite{zhang2010convex}, in which the task relationship is characterized by a task covariance matrix.
Unlike MTGP and MTRL, Multitask Sparse Structure Learning (MSSL)~\cite{gonccalves2016multi} directly learns a task precision matrix using a regularized Gaussian graphical model. On the other hand, the recently proposed Bayesian Multitask with Structure Learning (BMSL)~\cite{goncalves2019bayesian} imposes sparsity constraints (guided with prior information) on the inverse of covariance matrix to improve model interpretability.



Organizing multiple tasks with a concrete data structure (e.g., graphs, trees or disjoint clusters) is an alternative to infer their relationship.
One notable example is the TAsk Tree (TAT)~\cite{han2015learning}, in which the authors decomposed the parameter matrix into multiple layers and devised sequential constraints to make the distance between the parameters in the component matrices corresponding to each pair of tasks decrease over layers.
Despite the great potential of a tree structure and the solid theoretical guarantee behind optimization, TAT itself does not output a valid tree topology. Instead, one needs some post-hoc procedures (like the normalized graph cut~\cite{shi2000normalized} in each layer) to construct a tree-like architecture. The generated tree helps to group tasks based on model closeness, but it does not identify critical structures or connections among tasks.

In terms of a graph, one can view each task as a node, and two nodes are connected if the two tasks are related. Although the non-zero entries in the precision matrix carry partial correlations between two tasks~\cite{rue2005gaussian}, there is no guarantee that the learned precision matrix (from MSSL, BMSL, etc.) contains only non-positive off-diagonal entries and is zero row-sum, whereas both constraints are necessary to define a valid graph Laplacian~\cite{dong2016learning}. On the other hand, most existing graph structure multi-task learning methodologies assume that the graph topology is known a priori (e.g.,~\cite{evgeniou2005learning,nassif2018distributed,nassif2018regularization}) or can be simply predefined from the observation data (e.g.,~\cite{he2019efficient,chen2010graph}). Unfortunately, in many real-world scenarios, a graph structure is either unavailable or hard to be predefined correctly due to its complex nature~\cite{argyriou2013learning}.

Although the interpretable machine learning has gained increasing attention in recent years, existing interpretable multi-task models are always application-specific and feature-level based, i.e., revealing how much each feature contributes to the regression/classification result. For example, in industrial process control, the interpretability can be obtained by using an attention mechanism to determine which sensor influences the performance of product quality prediction~\cite{yeh2019interpretable}. By contrast, we target ``relationship interpretability" by enforcing the model to learn a sparse graph over tasks, which can give us insight about the relationships between tasks~\cite{zhang2017overview}.


Our work is similar in spirit to the graph fused Lasso (GFL)~\cite{chen2010graph} and the Convex Clustering Multi-Task Learning (CCMTL)~\cite{he2019efficient}. However, both GFL and CCMTL separate multi-task learning and graph structure estimation. Specifically, CCMTL predefines a static $k$-NN graph by measuring the $\ell_2$ distance on the parameters of prediction models learned independently for each task, whereas GFL generates the graph simply by evaluating the correlation coefficient between the response variables of pairwise tasks. More recently,~\cite{liu2019learning} proposes a neural network based graph multi-task learning framework for natural language processing (NLP) applications with input of text sequences, in which the task relatedness is not static but changes over time. The authors learn task communications by taking ideas from message passing~\cite{berendsen1995gromacs}, in which a directed (and usually dense) graph is defined over tasks. Our work is not designed for text sequences in a dynamical environment. Moreover, we aim to learn an undirected graph that is sparse and much easier to interpret.

\section{The Problem of Learning a Graph in MTL}
When learning linear models, each task is represented as a predictive function $\mathbf{w}_t^T\mathbf{x}_t\mapsto y_t$, where $\mathbf{w}_t$ is the regression parameter. The multi-task regression problem with a regularization $\Omega$ on the model parameters is defined as:
\begin{equation}\label{eq:obj1}
\min_{W} \sum_{t=1}^T\|\mathbf{w}_t^T\mathbf{x}_t-y_t\|_2^2 + \gamma\Omega(W): W\in \mathbf{M}_{d,T},
\end{equation}
where $W=[\mathbf{w}_1,\mathbf{w}_2,\cdots,\mathbf{w}_T]$ consists of columns $\mathbf{w}_t$.


Graph regularization is a natural choice in Eq.~(\ref{eq:obj1}), which is defined as:
\begin{equation}\label{eq:obj2}
\Omega(W) = \sum_{i=1}^T\sum_{j\in\mathcal{N}_i}A_{ij}\|\mathbf{w}_i-\mathbf{w}_j\|_2^2,
\end{equation}
where $A_{i,j}$ encodes the relatedness between task $i$ and task $j$, $\mathcal{N}_i$ is the set of neighbors of $i$, i.e., the set of nodes connected to task $i$ by an edge.
Let us define the pairwise distance matrix $Z$ as $Z_{ij} = \|\mathbf{w}_i-\mathbf{w}_j\|_2^2$, the quadratic penalty term in Eq.~(\ref{eq:obj2}) is equivalent to\footnote{$L = D-A$ is the graph Laplacian matrix, where $D = \diag{\mathbf{d}}$ is the diagonal matrix formed by the degrees of the vertices $d_i = \sum_{j=1}^T A_{ij}$.}~\cite{zhou2004regularization}:
\begin{equation} \label{eq:reg}
\sum_{i=1}^T\sum_{j\in\mathcal{N}_i}A_{ij}\|\mathbf{w}_i-\mathbf{w}_j\|_2^2 = \|A\circ Z\|_{1,1} = 2\tr{W^TLW}.
\end{equation}


Therefore, Eq.~(\ref{eq:obj2}) can be simply expressed as:
\begin{equation}\label{eq:obj4}
\min_{W} g(W) + \gamma \|A\circ Z\|_{1,1}: W\in \mathbf{M}_{d,T}, Z_{ij} = \|\mathbf{w}_i-\mathbf{w}_j\|_2^2,
\end{equation}
with $g(W)=\sum_{t=1}^T\|\mathbf{w}_t^T\mathbf{x}_t-y_t\|_2^2$.

Thus, the problem we are going to address is how to learn simultaneously the model parameters $W$ of $T$ tasks and the graph of tasks via its weighted adjacency matrix $A$ in a joint manner with the following objective:
\begin{equation}\label{eq:obj_simple}
\begin{split}
\min_{W,A} &~g(W) + \gamma \|A\circ Z\|_{1,1} + f(A): \\
& W\in \mathbf{M}_{d,T}, A\in \mathcal{A}, Z_{ij} = \|\mathbf{w}_i-\mathbf{w}_j\|_2^2,.
\end{split}
\end{equation}

\subsection{Learning Proper Graph Structure: the Role of $f(A)$ }
$f(A)$ has to play two important roles: (1) prevent $A$ from going to the trivial solution $A=0$ and (2) impose further structure using prior information on $A$.


The space of all valid weighted adjacency matrix $A$ is given by:
\begin{equation}
\mathcal{A}=\left\{A\in S^T:\ (\forall i\neq j)\ A_{ij}\geq0,\ \diag{A}=0\right\},
\end{equation}
which can be viewed as a relaxation of the search space defined by either task covariance matrix or graph Laplacian, both of which are in $S_{+}^T$.


To promote the discovery of connected graph, we encourage each node to be connected to at least another node. Further, we want to control the sparseness of the resulting graph. Motivated by recent progress in graph signal processing (e.g.,~\cite{dong2016learning,chepuri2017learning,kalofolias2016learn}), we use the following model with parameters $\alpha>0$ and $\beta>0$ to control the shape of the graph:
\begin{equation}\label{eq:obj_adj}
\min_{A\in \mathcal{A}} = \|A\circ Z\|_{1,1} - \alpha \mathbf{1}^T\log(A\mathbf{1}) + \beta\|A\|_F^2: Z_{ij} = \|\mathbf{w}_i-\mathbf{w}_j\|_2^2,
\end{equation}
where $\mathbf{1}=[1,\cdots,1]^T$.


The logarithmic barrier acts on the node degree vector $A\mathbf{1}$. This means that it forces the degrees to be positive, but does not prevent edges from becoming zero. This improves the overall connectivity of the graph, without compromising sparsity. Note however, that adding solely a logarithmic term ($\beta=0$) leads to very sparse graphs, and changing $\alpha$ only changes the scale of the solution and not the sparsity pattern. For this reason, we add the term $\beta\|A\|_F^2$.

Combine Eqs.~(\ref{eq:obj2}), (\ref{eq:obj_simple}), and (\ref{eq:obj_adj}), our final objective is given by:
\begin{equation}\label{eq:obj_complex}
\begin{split}
\min_{W \in \mathbf{M}_{d,T} ,A \in \mathcal{A}} &~ \sum_{t=1}^T\|\mathbf{w}_t^T\mathbf{x}_t-y_t\|_2^2 + \gamma \|A\circ Z\|_{1,1} \\
&- \alpha \mathbf{1}^T\log(A\mathbf{1}) + \beta\|A\|_F^2.
\end{split}
\end{equation}

\subsection{Graph Adjacency Multi-Task Learning (GAMTL)}


The objective (\ref{eq:obj_complex}) is bi-convex in $W$ and $A$ (see \textit{Theorem}~\ref{th:biconvex}). We thus exploit this property and define the GAMTL in Algorithm~\ref{alg:MTL}, which alternates between minimization of $W$ and minimization of $A$.

\begin{algorithm}[!htbp]
\caption{Graph Adjacency Multi-Task Learning}
\small
\label{alg:MTL}
\begin{algorithmic}[1]
\Require
$W^0\in \mathbf{M}_{d,T}$;
$A^0$;
$\{(\mathbf{x}_t^i,y_t^i):i=1,\cdots,N_t, t=1,\cdots,T\}$.
\Ensure
$W^*$;
$A^*$.
\For {$k=1,2,\cdots$}
\State $W\leftarrow \argmin \big\{\sum_{t=1}^T\|\mathbf{w}_t^T\mathbf{x}_t-y_t\|_2^2 + \gamma\|A\circ Z\|_{1,1}: W\in \mathbf{M}_{d,T}, Z_{ij} = \|\mathbf{w}_i-\mathbf{w}_j\|_2^2\big\}$
\State $A\leftarrow \argmin \|A\circ Z\|_{1,1} - \alpha \mathbf{1}^T\log(A\mathbf{1}) + \beta\|A\|_F^2$
\EndFor
\end{algorithmic}
\end{algorithm}


For a faster convergence, the initial weight matrix $W^0$ consists of prediction models learned independently from each task, and the initial weighted adjacency matrix $A^0$ is a fully connected graph in which the edge weight is defined as the $\ell_2$ norm over initial model parameters. In our implementation, $W$ is updated with the Combinatorial Multigrid (CMG) solver~\cite{koutis2011combinatorial}, $A$ is solved by the primal dual algorithm~\cite{komodakis2015playing} as adopted in~\cite{kalofolias2016learn}.

\subsubsection{Solving for $W$} \label{sec:solve_w}

The first problem of Algorithm~\ref{alg:MTL} is obtained as the solution of
$\sum_{t=1}^T\|\mathbf{w}_t^T\mathbf{x}_t-y_t\|_2^2 + \sum_{i=1}^{T}\sum_{j\in\mathcal{N}_i}A_{ij}\|\mathbf{w}_i-\mathbf{w}_j\|_2^2$, which, for \textit{Lemma}~\ref{th:quadraticW}, is quadratic in $W$ and can be solved efficiently due to the sparseness of the variables.
\subsubsection{Solving for $A$}
The computation of $A$ is described in Algorithm~\ref{alg:primaldual} \cite{kalofolias2016learn}\footnote{$[\mathbf{x}]_+$ is the positive component of $\mathbf{x}$ and operations are performed element-wise. $^+$ denotes the update value. $\mathbf{w}$ is the upper part of $A$, thus enforcing $A$ to be symmetric.}, where the operator $S$ is defined such that $A\mathbf{1}=S\mathbf{w}$ and $\mathbf{w}$ is the vector form of $A$.

\begin{algorithm}[!htbp]
\caption{Primal-Dual algorithm for $A$}
\small
\label{alg:primaldual}
\begin{algorithmic}[1]
\Require
$A^0,\gamma,\alpha,\beta,S,Z,\epsilon$
\State $\mathbf{w}\gets \vectorform{A^0}$, $\mathbf{z}\gets \vectorform{Z}$
\State $\mathbf{v}=S\mathbf{w}$
\While {$||\mathbf{q}-\mathbf{y}||/||\mathbf{w}|| > \epsilon \lor ||\bar{\mathbf{q}}-\bar{\mathbf{y}}||/||\mathbf{v}|| > \epsilon  $}
\State $\mathbf{y}^+=\mathbf{w}-\gamma(2\beta \mathbf{w} + S^T\mathbf{v})$, $\bar{\mathbf{y}}^+=\mathbf{v}+\gamma S\mathbf{w}$
\State $\mathbf{p}^+=[\mathbf{y}-2 \gamma \mathbf{z}]_+$, $\bar{\mathbf{p}}^+=(\bar{\mathbf{y}}-\sqrt{\bar{\mathbf{y}}^2+4 \alpha \gamma})/2$
\State $\mathbf{q}^+=\mathbf{p}-\gamma (2 \beta \mathbf{p} +S^T\mathbf{p})$, $\bar{\mathbf{q}}^+=\bar{\mathbf{p}}+\gamma S\bar{\mathbf{p}}$
\State $\mathbf{w}^+=\mathbf{w}-\mathbf{y}+\mathbf{q}$, $\mathbf{v}^+=\mathbf{v}-\bar{\mathbf{y}}+\bar{\mathbf{q}}$
\EndWhile \\
\Return $A \gets \matrixform{\mathbf{w}}$
\end{algorithmic}
\end{algorithm}

\begin{theorem}\label{th:solutionW}
The problem $W = \arg \min_W \sum_{t=1}^T\|\mathbf{w}_t^T\mathbf{x}_t-y_t\|_2^2 + \sum_{i=1}^{T}\sum_{j\in\mathcal{N}_i}A_{ij}\|\mathbf{w}_i-\mathbf{w}_j\|_2^2$ reduces to solving a linear system.
\end{theorem}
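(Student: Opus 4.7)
The plan is to show that the objective is a convex quadratic in $W$, so that its minimizer is characterized by the linear stationarity equation $\nabla J(W) = 0$.

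First, the data-fitting term $\sum_{t=1}^{T}\|\mathbf{w}_t^T \mathbf{x}_t - y_t\|_2^2$, read per task as $\|X_t \mathbf{w}_t - \mathbf{y}_t\|_2^2$ where $X_t \in \mathbb{R}^{N_t \times d}$ stacks the inputs of task $t$, is block-separable and quadratic in $W$ with per-task Hessian $2 X_t^T X_t \succeq 0$. Second, by the identity in Eq.~\eqref{eq:reg} (see also \textit{Lemma}~\ref{th:quadraticW}), the graph regularizer is a quadratic form in $W$ governed by the Laplacian $L = D - A$ of the task graph, with $L \succeq 0$. Summing the two contributions yields a convex quadratic in $W$, so any stationary point is a global minimizer and existence is automatic.

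Differentiating with respect to $\mathbf{w}_t$ and setting the result to zero gives, for each $t$ and after using the symmetry of $L$,
$$X_t^T X_t \mathbf{w}_t + 2 \sum_{k=1}^T L_{tk}\mathbf{w}_k = X_t^T \mathbf{y}_t.$$
Stacking these $T$ equations and vectorizing $W$ column-wise, I would rewrite them as the single linear system
$$\bigl[\operatorname{blockdiag}(X_1^T X_1,\ldots,X_T^T X_T) + 2\,(L \otimes I_d)\bigr]\mvec{W} = \mathbf{b},$$
with $\mathbf{b}$ the stacked right-hand side whose $t$-th block is $X_t^T \mathbf{y}_t$. The system matrix is symmetric positive semidefinite, and inherits sparsity from $L$ (which the outer algorithm actively encourages), hence it admits efficient solution via the Combinatorial Multigrid solver mentioned just after Algorithm~\ref{alg:MTL}.

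The only non-mechanical step is getting the direction of the Kronecker product right when differentiating the trace form of the regularizer and matching it to the column-wise stacking of $W$; everything else (positive semidefiniteness, sufficiency of stationarity, and the reduction to a linear solve) is immediate from the convex quadratic structure. Thus the minimization of $\sum_{t=1}^T \|\mathbf{w}_t^T\mathbf{x}_t - y_t\|_2^2 + \sum_{i=1}^T\sum_{j\in\mathcal{N}_i} A_{ij}\|\mathbf{w}_i - \mathbf{w}_j\|_2^2$ reduces to a sparse symmetric linear system, as claimed.
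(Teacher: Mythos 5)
Your proof is correct and follows essentially the same route as the paper: both write the objective as a convex quadratic in the stacked parameter vector, set the gradient to zero, and obtain the linear system $\bigl[\operatorname{blockdiag}(X_t^TX_t) + c\,(L\otimes I_d)\bigr]\mvec{W}=\mathbf{b}$, which is exactly the paper's $(B+C)V=D$ with $B=(\sum_{i}\sum_{j\in\mathcal{N}_i}A_{ij}(\mathbf{e}_i-\mathbf{e}_j)(\mathbf{e}_i-\mathbf{e}_j)^T)\otimes I_d$ and $C=XX^T$. Your added remarks on positive semidefiniteness and sufficiency of stationarity only make the argument slightly more complete than the paper's.
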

\begin{proof}
Suppose $N=\sum_t N_t$, let us define $X = \diag{X_1,\dots,X_T} \in \mathbb{R}^{dT \times N}$ as a block diagonal matrix, define $W = [w^T_1,\dots w^T_T]^T \in \mathbb{R}^{d T \times 1}$ as a column vector, and define $Y = [y_1^T,\dots,y_T^T] \in \mathbb{R}^{1 \times N}$ as a row vector, the original problem can be rewritten as:
\begin{equation}\label{eq:appendix_obj2}
\min_V \|V^TX-Y\|_2^2 + \sum_{i=1}^{T}\sum_{j\in\mathcal{N}_i}A_{ij}\|V((\mathbf{e}_i-\mathbf{e}_j)\otimes I_d)\|_2^2,
\end{equation}
where $\mathbf{e}_i\in \mathbb{R}^T$ is an indicator vector with the $i$-th element set to $1$ and others $0$, and $I_d$ is an identity matrix of size $d\times d$.

Setting the derivative of Eq.~(\ref{eq:appendix_obj2}) equal to zero with respect to $V$, we obtain the following linear system:
\begin{equation}
    (B+C)V = D,
\end{equation}
where $B=(\sum_{i=1}^{T}\sum_{j\in\mathcal{N}_i}A_{ij}(\mathbf{e}_i-\mathbf{e}_j)(\mathbf{e}_i-\mathbf{e}_j)^T)\otimes I_d$, $C=XX^T$, and $D=XY^T$.
\end{proof}
\begin{lemma}\label{th:quadraticW}
The problem $W = \arg \min_W \sum_{t=1}^T\|\mathbf{w}_t^T\mathbf{x}_t-y_t\|_2^2 + \sum_{i=1}^{T}\sum_{j\in\mathcal{N}_i}A_{ij}\|\mathbf{w}_i-\mathbf{w}_j\|_2^2$ is quadratic in $W$.
\end{lemma}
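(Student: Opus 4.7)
The plan is to exhibit the objective as a sum of two quadratic forms in the entries of $W$, with no calculation beyond what has already been done in the paper. First I would observe that the data-fidelity term $\sum_{t=1}^T\|\mathbf{w}_t^T\mathbf{x}_t-y_t\|_2^2$ decouples across tasks, and for each $t$ is the squared $\ell_2$ norm of an affine function of $\mathbf{w}_t$, hence quadratic in $\mathbf{w}_t$. Stacking the $\mathbf{w}_t$ into a single long vector (as done in the proof of \textit{Theorem}~\ref{th:solutionW}) shows this term is quadratic in $W$ as a whole; equivalently it equals $V^T(XX^T)V - 2V^T XY^T + YY^T$ in the notation of that proof.

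For the coupling term, I would simply invoke the identity already established in Eq.~(\ref{eq:reg}), namely
\begin{equation*}
\sum_{i=1}^T\sum_{j\in\mathcal{N}_i}A_{ij}\|\mathbf{w}_i-\mathbf{w}_j\|_2^2 = 2\tr{W^T L W},
\end{equation*}
where $L=D-A$ is the graph Laplacian associated with the fixed adjacency matrix $A$. Since $A$ is held constant in this subproblem, $L$ is a constant symmetric matrix and $\tr{W^T L W}$ is manifestly a quadratic form in the entries of $W$.

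The sum of two quadratic forms remains quadratic, so the overall objective is quadratic in $W$, which is the claim. I do not anticipate any technical obstacle: the only thing worth noting is that the coupling quadratic form is positive semidefinite (since $L\succeq 0$ for any $A\in\mathcal{A}$), which ensures the combined quadratic is convex but is not strictly needed for the Lemma. As a sanity check, \textit{Lemma}~\ref{th:quadraticW} is a strictly weaker statement than \textit{Theorem}~\ref{th:solutionW} just proved: the explicit Hessian $B+C$ appearing in the linear system $(B+C)V=D$ is precisely the matrix of the quadratic form, so reading off that derivation gives a one-line alternative proof.
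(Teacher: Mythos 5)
Your proof is correct and follows essentially the same route as the paper, whose entire proof is the one-line observation that the claim follows from \textit{Theorem}~\ref{th:solutionW} (whose linear system $(B+C)V=D$ exhibits the Hessian explicitly, exactly as you note in your final paragraph). Your additional self-contained argument --- decomposing the objective into the per-task squared-affine data term plus the fixed quadratic form $2\tr{W^TLW}$ from Eq.~(\ref{eq:reg}) --- is a slightly more explicit presentation of the same computation and is entirely sound.
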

\begin{proof}
\textit{Lemma}~\ref{th:quadraticW} follows from application of \textit{Theorem}~\ref{th:solutionW}.
\end{proof}
\begin{theorem} \label{th:biconvex}
The function $f(W,A)$ defined by Eq.~(\ref{eq:obj_complex}) is bi-convex  and analytic for $A\mathbf{1} > 0$.
\end{theorem}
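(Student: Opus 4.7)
The plan is to verify bi-convexity termwise by freezing one of the two arguments at a time, then check analyticity on the stated open domain. Recall that the objective splits as
\begin{equation*}
f(W,A) = g(W) + \gamma\|A\circ Z\|_{1,1} - \alpha\mathbf{1}^T\log(A\mathbf{1}) + \beta\|A\|_F^2,
\end{equation*}
with $g(W)=\sum_t\|\mathbf{w}_t^T\mathbf{x}_t-y_t\|_2^2$ and $Z_{ij}=\|\mathbf{w}_i-\mathbf{w}_j\|_2^2$. Note $g$ depends only on $W$, while the last two summands depend only on $A$; the middle term is the only one in which $W$ and $A$ couple.

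First, I fix $A\in\mathcal{A}$ and show $W\mapsto f(W,A)$ is convex. The term $g(W)$ is a sum of squared affine functions of $W$, hence convex quadratic. The coupling term rewrites, using $A_{ij}\geq 0$ and $\diag{A}=0$, as
\begin{equation*}
\gamma\|A\circ Z\|_{1,1} = \gamma\sum_{i,j}A_{ij}\,\|\mathbf{w}_i-\mathbf{w}_j\|_2^2,
\end{equation*}
which is a nonnegative combination of the convex quadratics $W\mapsto\|\mathbf{w}_i-\mathbf{w}_j\|_2^2$ and hence convex in $W$. The remaining two terms are constants in $W$. Summing preserves convexity.

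Next, I fix $W$ (so $Z$ is constant) and show $A\mapsto f(W,A)$ is convex on $\{A:A\mathbf{1}>0\}\cap\mathcal{A}$. Here $g(W)$ is constant. The coupling term $\gamma\sum_{ij}A_{ij}Z_{ij}$ is linear in $A$, so convex. The penalty $\beta\|A\|_F^2$ is convex quadratic. For $-\alpha\mathbf{1}^T\log(A\mathbf{1})=-\alpha\sum_i\log((A\mathbf{1})_i)$, the scalar function $-\log$ is convex on $(0,\infty)$, and $A\mapsto(A\mathbf{1})_i$ is linear, so each summand is a composition of a convex function with an affine map — hence convex — on the half-space $(A\mathbf{1})_i>0$. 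Summing again preserves convexity. This establishes bi-convexity.

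Finally, for analyticity: $g(W)$, $\|A\circ Z\|_{1,1}$, and $\|A\|_F^2$ are polynomial in their arguments and therefore jointly real-analytic on the whole ambient space. The map $A\mapsto A\mathbf{1}$ is linear (analytic), and $x\mapsto-\log x$ is real-analytic on $(0,\infty)$; composition and summation of real-analytic functions on open sets are real-analytic, so $-\alpha\mathbf{1}^T\log(A\mathbf{1})$ is analytic on the open set $\{A:A\mathbf{1}>0\}$. Adding the polynomial pieces yields analyticity of $f$ on $\{(W,A):A\mathbf{1}>0\}$. There is no real obstacle here beyond carefully invoking the domain restriction $A\mathbf{1}>0$ to make the $-\log$ term well-defined, convex, and analytic; that restriction is precisely what is assumed in the statement.
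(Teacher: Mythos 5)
Your proof is correct and follows essentially the same route as the paper's: termwise verification that each summand is convex in $W$ with $A$ fixed (using $A_{ij}\geq 0$ for the coupling term) and convex in $A$ with $W$ fixed (using convexity of $-\log$ composed with the affine map $A\mapsto A\mathbf{1}$ on $A\mathbf{1}>0$), followed by analyticity of each term on the stated open domain. Your treatment of the logarithmic barrier via the standard convex-composed-with-affine argument is in fact cleaner and more precise than the paper's ``composition of not decreasing and convex functions'' remark, but it is the same underlying idea.
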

\begin{proof}
$f(W,A) =  \sum_{t=1}^T\|\mathbf{w}_t^T\mathbf{x}_t-y_t\|_2^2 + \gamma \|A\circ Z\|_{1,1}
- \alpha \mathbf{1}^T\log(A\mathbf{1}) + \beta\|A\|_F^2$. The first term is quadratic in $W$.
The second term is also quadratic in $W$, but linear in $A$ (see Eq.~(\ref{eq:reg})). The third term is convex in $A$ for $A\mathbf{1} > 0$, while the last term is quadratic in $A$. We notice that the composition of the terms in $A$ form a convex function since is a composition of not decreasing and convex functions for $A \mathbf{1} >0$. It is also possible to show that $\nabla^2_{A}f(W,A) \succeq 0$ and block diagonal. Since  $f$ is quadratic in $W$ (see sec.\ref{sec:solve_w}) and convex in $A$, it is a bi-convex function for $A \mathbf{1} > 0$. Further $f$ is analytic since all terms are analytic functions for $A \mathbf{1} >0$.
\end{proof}
\begin{theorem} \label{th:convergence}
The sequence of $W^k,A^k$ generated by Algorithm~\ref{alg:MTL} converges, if bounded, to a first order stationary point\footnote{First order stationary point is defined for a function $f(x)$ as $x$ such that $\nabla_x f(x)=0$ , while second order stationary point it is a stationary point and $\nabla^2f(x) \succeq 0$.}, while the proximal version converges almost surely to the second-order stationary point.
\end{theorem}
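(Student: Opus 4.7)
The plan is to view Algorithm~\ref{alg:MTL} as a block coordinate (alternating) minimization scheme on the joint objective $f(W,A)$ in Eq.~(\ref{eq:obj_complex}) and then invoke standard convergence machinery for bi-convex and real-analytic functions. The starting point is Theorem~\ref{th:biconvex}, which guarantees that $f$ is bi-convex and analytic on the open set $\{A\mathbf{1}>0\}$; together with the fact that each inner problem (line~2 and line~3 of Algorithm~\ref{alg:MTL}) is convex in its active block and admits a global minimizer (line~2 by Theorem~\ref{th:solutionW}; line~3 by \cite{kalofolias2016learn}), this immediately gives that the sequence $\{f(W^k,A^k)\}$ is monotonically non-increasing.

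First I would establish the basic ingredients: monotone decrease of the objective, and the so-called sufficient decrease condition $f(W^k,A^k)-f(W^{k+1},A^{k+1})\ge c(\|W^{k+1}-W^k\|^2+\|A^{k+1}-A^k\|^2)$ for some $c>0$. The quadratic structure of $f$ in $W$ (Lemma~\ref{th:quadraticW}) and the strong convexity in $A$ coming from the $\beta\|A\|_F^2$ term yield this bound on bounded subsets where $A\mathbf{1}$ stays bounded away from $0$ (which is enforced by the logarithmic barrier). Combined with the boundedness hypothesis, this produces a Cauchy-type bound $\sum_k\|W^{k+1}-W^k\|^2+\|A^{k+1}-A^k\|^2<\infty$, so that successive differences vanish and every accumulation point of $\{(W^k,A^k)\}$ satisfies the partial optimality conditions $\nabla_W f(W^\star,A^\star)=0$ and $0\in\partial_A f(W^\star,A^\star)+N_{\mathcal{A}}(A^\star)$, i.e.\ is a first-order stationary point of the constrained problem.

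To upgrade accumulation-point stationarity to convergence of the entire sequence, I would invoke the Kurdyka--Łojasiewicz (KL) framework of Bolte, Sabach, and Teboulle for proximal alternating minimization. Real analyticity of $f$ on $\{A\mathbf{1}>0\}$ (Theorem~\ref{th:biconvex}), plus the semi-algebraic character of the feasible set $\mathcal{A}$, ensures that $f$ satisfies the KL property with a Łojasiewicz exponent. Coupled with the sufficient decrease and the relative error bound $\|\nabla f(W^{k+1},A^{k+1})\|\le M(\|W^{k+1}-W^k\|+\|A^{k+1}-A^k\|)$ (obtained from the optimality conditions of each subproblem), the KL machinery yields that the full sequence $(W^k,A^k)$ has finite length and therefore converges to a single first-order stationary point.

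For the second assertion, I would consider the proximal stochastic variant in which a small random perturbation is injected before (or inside) each block update, making the iteration a perturbed alternating proximal-gradient method. The hard part is this step: ruling out convergence to strict saddle points despite the constraint $A\in\mathcal{A}$ and the non-smoothness of the $-\alpha\mathbf{1}^T\log(A\mathbf{1})$ barrier. I would lean on the strict-saddle escape arguments of Ge et al.\ and Jin et al., adapted to the proximal alternating setting (as in Davis--Drusvyatskiy for stochastic proximal methods): near a strict saddle, the negative-curvature direction of $\nabla^2 f$ guarantees that the noisy proximal step produces sufficient decrease with constant probability, and a Borel--Cantelli argument then shows that strict saddles are visited only finitely often almost surely. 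Combining this with the first-order convergence established above yields almost sure convergence to a second-order stationary point, completing the proof.
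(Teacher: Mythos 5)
Your overall route is the same one the paper takes, but the paper's ``proof'' is a single sentence that delegates everything to \textit{Theorem}~\ref{th:biconvex} and to the external results of Xu--Yin and Li et al.; what you have done is reconstruct the interior of those citations. Your ingredients --- monotone/sufficient decrease for exact block minimization, square-summability of successive differences under the boundedness hypothesis, the Kurdyka--{\L}ojasiewicz property obtained from real analyticity of $f$ on $\{A\mathbf{1}>0\}$ (exactly what \textit{Theorem}~\ref{th:biconvex} supplies), and the finite-length argument upgrading subsequential to full-sequence convergence --- are precisely the mechanism of the cited block-coordinate-descent theory, so the first-order part of your sketch is faithful to the paper's intent and considerably more informative than its one-line proof.

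Two soft spots are worth flagging. First, the sufficient-decrease inequality for the \emph{exact} (non-proximal) $W$-update requires strong convexity of the $W$-subproblem; its Hessian is $XX^T+\gamma L\otimes I_d$ (cf.\ \textit{Theorem}~\ref{th:solutionW}), which can be singular when $X_tX_t^T$ is rank-deficient, since the graph Laplacian annihilates the constant direction. The cited framework handles this by assuming per-block strong convexity or by switching to the proximal update, so your claim that quadratic structure alone ``yields this bound'' is too quick --- you need either an explicit strong-convexity assumption or the proximal regularization for that block. Second, for the second-order claim your mechanism differs from the one the paper leans on: the cited alternating-minimization result obtains almost-sure convergence to second-order stationary points via \emph{random initialization} and the center-stable manifold theorem, whereas you propose injected-noise saddle escape with a Borel--Cantelli argument in the style of Ge and Jin. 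That is a legitimate alternative in principle, but adapting noise-injection escape arguments to a constrained, exactly-minimized block scheme with the barrier $-\alpha\mathbf{1}^T\log(A\mathbf{1})$ is substantially harder than the measure-zero stable-manifold argument, and you correctly identify it as the hard step without closing it. If you intend the proof to match the theorem as stated (``the proximal version converges almost surely''), the stable-manifold route is the one to follow.
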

\begin{proof}
The results follows from \textit{Theorem}~\ref{th:biconvex} and the results of \cite{xu2013block, li2019alternating} (\textit{Theorem}~$1$ and \textit{Theorem}~$2$).
\end{proof}
\subsection{Computational Complexity}
The computational complexity of Algorithm~\ref{alg:MTL} is defined by the complexity of computing $W$ and computing $A$.
Computing $W$ requires to solve the equation defined in Section~\ref{sec:solve_w}, whose complexity is $\mathcal{O}(d^3T^3+ d^2N + T^3)$\footnote{The first term is due to the inversion of $XX^T$, while the second the computation of $XX^T$, the third to compute $L = EE^T$, where $E$ consists of indicator vectors.}.
When the matrix $A$ is sparse, the solution can be efficiently computed using CMG~\cite{koutis2011combinatorial} and is shown to have a linear empirical complexity in $T$~\cite{he2019efficient}.
The complexity of computing $A$ is proportional to $\mathcal{O}(T^2)$ since it requires to evaluate function over a vector $\mathbf{w}$ (inside Algorithm~\ref{alg:primaldual}) of size $\mathcal{O}(T^2)$\footnote{The product $S\mathbf{w}$ is equivalent to sum the incident nodes for each node, whose complexity is $\mathcal{O}(1)$, when we consider only multiplications, even if $S$ is a matrix of size $T \times T^2$.}. The complexity could be reduce to $\mathcal{O}(T^2 + d^2N + d^3T)$ by enforcing sparsity on $A$ in solving for $W$, where the last term is only derived empirically \cite{he2019efficient}.
\subsection{Non-linear Extension}
We present nonlinear extension of GAMTL. Although kernel extension is straightforward~\cite{ciliberto2015convex}, this approach might lead to huge computational burden when the number of samples increases. Another more natural and expressive approach is to combine our joint objective and alternating optimization with parametrized nonlinear feature transformations, such as neural networks. More specifically, let $g(\mathbf{x};\theta): \mathbb{R}^d\mapsto\mathbb{R}^p$ be a neural network with learnable parameter $\theta$ that defines a nonlinear transformation of the input features from $\mathbb{R}^d$ to $\mathbb{R}^p$. We then add one more layer defined by parameter matrix $W$ on top of the nonlinear mapping.

Although a multi-layer perceptron (MLP) coupled with nonlinear activation functions always serves as the workhorse for nonlinear multi-task learning (e.g.,~\cite{zhao2017efficient,taylor2017personalized}), there is a large discrepancy between the stochastic gradient descent and our alternating optimization in Algorithm~\ref{alg:MTL}.
To this end, we resort to the standard RBFN. In contrast to a MLP, a RBFN can be trained layer-wisely: an unsupervised learning phase in the first layer (to determine RBF centers) followed by a linear supervised learning phase in the second layer. In this sense, one can simply integrate GAMTL into the second layer of a multi-head RBFN~\cite{liao2006radial}, in which the first layer is trained with $k$-means and the second layer is trained with Algorithm~\ref{alg:MTL}.

\begin{figure}[htbp]
\centering
\includegraphics[width=0.5\textwidth,page=3,trim=1cm 3cm 3cm 1cm,clip]{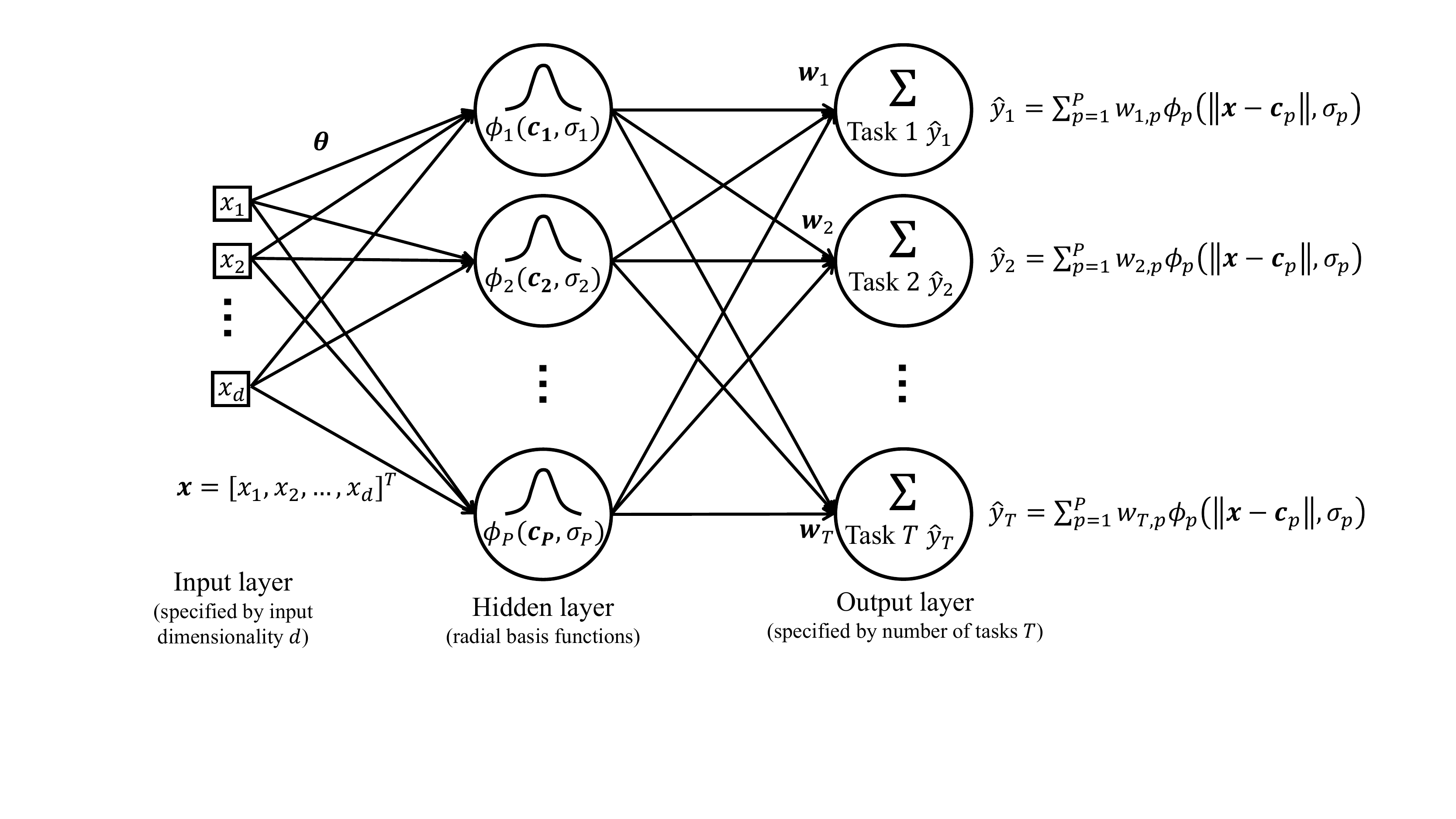}
\caption{A multi-head radial basis function (RBF) network. Each of the output nodes represents a unique task. Each task has its own hidden-to-output weights $\mathbf{w}_i$ but all the tasks share the same input-to-hidden weights $\theta$. $W=[\mathbf{w}_1,\mathbf{w}_2,\cdots,\mathbf{w}_T]$ is learned with Algorithm~\ref{alg:MTL}, whereas $\theta$ is learned with $k$-means.
{
The activation of hidden node $p$ is characterized by a RBF $\phi_p(\mathbf{x})=\phi_p(\|\mathbf{x}-\mathbf{c}_p\|,\sigma_p)$ with the centroid $\mathbf{c}_p$. The prediction $\hat{y}_t$ of $\mathbf{x}$ in the $t$-th task is $\hat{y}_t = w_t^T u_t $, where $u_t  = [\phi_p(\|\mathbf{x}_t^i-\mathbf{c}_p\|,\sigma_p)] =\phi_\theta(x_t) \in R^{p \times N_t}$ is the hidden features vector and $\theta = (\sigma_1,\dots,\sigma_P)$.
}
}
\label{fig:RBFN}
\end{figure}

We term this improvement RBF-GAMTL, which solves the following problem
\begin{equation}\label{eq:obj_rbf}
\begin{split}
\min_{W \in \mathbf{M}_{d,T} ,A \in \mathcal{A}} &~ \sum_{t=1}^T\|\mathbf{w}_t^T\phi_\theta(\mathbf{x}_t)-y_t\|_2^2 + \gamma \|A\circ Z\|_{1,1} \\
&- \alpha \mathbf{1}^T\log(A\mathbf{1}) + \beta\|A\|_F^2.
\end{split}
\end{equation}
Fig.~\ref{fig:RBFN} depicts the structure of our multi-head RBFN for multi-task learning. The RBF-GAMTL Alg.\ref{alg:RBF} extends Alg.\ref{alg:MTL} and solves Eq.\ref{eq:obj_rbf}, by first selecting the number of RBF centers (i.e., $P$), the centers ($\{c_p\}_{p=1}^P$) and the RBF kernels widths ($\theta = (\sigma_1,\dots,\sigma_P)$) using the optimal-width method~\cite{benoudjit2003kernel}.


\begin{algorithm}[!htbp]
\caption{RBF-GAMTL}
\small
\label{alg:RBF}
\begin{algorithmic}[1]
\Require
$W^0\in \mathbf{M}_{d,T}$; $A^0$; $\{(\mathbf{x}_t^i,y_t^i):i=1,\cdots,N_t, t=1,\cdots,T\}$.
\Ensure
$W^*$;
$A^*$.
\State $\{c_p\},\theta \gets \text{Optimal-Width}(\{x_t\})$ \Comment{Initialize $c_p$ using \cite{benoudjit2003kernel}}
\For {$k=1,2,\cdots$}
\State $W\leftarrow \argmin \big\{\sum_{t=1}^T\|w_t^T\phi_\theta(x_t)-y_t\|_2^2 + \gamma\|A\circ Z\|_{1,1}: W\in \mathbf{M}_{d,T}, Z_{ij} = \|w_i-w_j\|_2^2\big\}$
\State $A\leftarrow \argmin \|A\circ Z\|_{1,1} - \alpha \mathbf{1}^T\log(A\mathbf{1}) + \beta\|A\|_F^2$
\EndFor
\end{algorithmic}
\end{algorithm}


\section{Experiments}

We evaluate the performance of GAMTL and RBF-GAMTL against six SOTA multi-task learning methodologies (namely MTRL~\cite{zhang2010convex}, MSSL~\cite{gonccalves2016multi}, BMSL~\cite{goncalves2019bayesian},  TAT~\cite{han2015learning}, GFL~\cite{chen2010graph} and CCMTL~\cite{he2019efficient}) on both synthetic data and real-world applications. Among the six competitors, MTRL and BMSL learn a task covariance matrix, MSSL targets a task precision matrix which can be interpreted as a graph Laplacian. TAT infers a tree-like structure over layered components of weight matrix, in which the leaf nodes denote different tasks. On the other hand, CCMTL predefines a static $k$-NN graph, whereas GFL generates a graph by evaluating the correlation coefficient between response variables of pairwise tasks.
For a fair comparison, the hyper-parameters of all competing methods are selected with either author recommended values or via $5$-fold cross validation.


\subsection{Synthetic Data}

The synthetic data we consider here aims at demonstrating that our methodology is able to precisely infer the intrinsic structure of tasks and enjoys significant improvement on the interpretability of task relatedness against its competitors. We generate two synthetic data to illustrate our points. Each data contains $20$ linear regression tasks of input dimension $30$. For each task, the input variable $\mathbf{x}_t$ are generated $i.i.d.$ from an isotropic multivariate Gaussian distribution, i.e., $\mathbf{x}_t\sim\mathcal{N}(\mathbf{0},\mathbf{I}_{30})$. The corresponding output is generated as $y_t =\mathbf{w}_t^T\mathbf{x}_t+\epsilon$, where $\epsilon\sim\mathcal{N}(0,1)$. For simplicity, we assume all tasks share the same input.

In the first data (denote $\tt{Syn}~1$), the task parameters are chosen so that tasks $1$ to $12$ and tasks $13$ to $18$ form two groups, whereas tasks $19$ and $20$ are independent and significantly different from any other tasks (thus can be interpreted as outlier tasks). Specifically, parameters of tasks $1$ to $12$ are $\mathbf{w}_{1:12}=\mathbf{w}_{g_1}+0.1\times \mathbf{u}_{30}$, where $\mathbf{w}_{g_1}\sim\mathcal{N}(\mathbf{1},\mathbf{I}_{30})$ and $\mathbf{u}_{30}$ denotes a $30$-dimensional random vector with each element uniformly distributed between $[0,1]$. Similarly, parameters of tasks $13$ to $18$ are $\mathbf{w}_{13:18}=\mathbf{w}_{g_2}+0.1\times \mathbf{u}_{30}$, where $\mathbf{w}_{g_2}\sim\mathcal{N}(-\mathbf{1},\mathbf{I}_{30})$. Different from tasks $1$ to $18$, $\mathbf{w}_{19}\sim\mathcal{N}(\mathbf{0},\mathbf{I}_{30})$ and $\mathbf{w}_{20}\sim\mathcal{N}(\mathbf{0},10\ast\mathbf{I}_{30})$.

In the second data (denote $\tt{Syn}~2$), each task is only related to its neighbor tasks to manifest strong locality relationships. Specifically, $\mathbf{w}_1\sim\mathcal{N}(\mathbf{0},\mathbf{I}_{30})$, $\mathbf{w}_{2:30}$ shares the same regression coefficients with $\mathbf{w}_1$ on dimensions $3$ to $30$. However, the first two dimensions of $\mathbf{w}_{2:30}$ are generated by applying a rotation matrix of the form $R=\left[\begin{matrix}\cos(\theta)&-\sin(\theta)\\\sin(\theta)&\cos(\theta)\\\end{matrix}\right]$ to the first two dimensions of $\mathbf{w}_1$, in which $\theta$ is evenly spaced between $[0,2\pi]$. In this sense, $\mathbf{w}_{20}$ gets back to $\mathbf{w}_1$ and is thus also closely related to $\mathbf{w}_2$ and $\mathbf{w}_3$.

We train each method on a training set of $20$ samples in each task, and test their performances on a test set of $80$ samples in each task. Fig.~\ref{fig:graph_topology_data1} and Fig.~\ref{fig:graph_topology_data2} demonstrate the task relatedness learned by all competing methods on $\tt{Syn}~1$ and $\tt{Syn}~2$, respectively. As can be seen, our GAMTL is able to learn even the complex ``circular" task relations and that the obtained sparse weighted adjacency matrix $A$ enhances interpretability on task relations. By contrast, both MTRL and MSSL recover the groups of tasks or the dense ``circular" structure, but such relatedness is not as straightforward as a graph and is likely to be dominated by the main diagonal of the task covariance matrix. BMSL performs well in $\tt{Syn}~1$, but fails in $\tt{Syn}~2$.
A static $k$-NN graph in CCMTL is hard to identify outlier tasks (see Fig.~\ref{fig:graph_topology_data1}(f)), whereas a graph defined using the correlation coefficient is likely to overfit the underlying task relatedness (see Fig.~\ref{fig:graph_topology_data2}(e)).
On the other hand, TAT often identifies partial relations between tasks. For example, in $\tt{Syn}~2$, TAT correctly discovers that neighboring tasks are similar locally, but fails to unveil the global ``circular" structure.
The RMSE values over $10$ independent runs are reported in Table~\ref{Tab:synthetic}. In most of the cases, a precise task relationship also reduces the overall regression error. Interestingly, RBF-GAMTL does not show performance gain over its linear counterpart. This is probably because that a linear model is sufficiently powerful for linear data. Moreover, a neural network is liable to overfitting and results in poor generalization with small sample size.


\begin{table*}[!hbpt]
\centering
\caption{RMSE (mean$\pm$std) on $\tt{Syn}~1$ and $\tt{Syn}~2$ over $10$ independent runs. The best two performances are marked in bold and underlined, respectively.}\label{Tab:synthetic}
\begin{tabular}{ccccccccc}
\toprule
 & MTRL & MSSL & BMSL & TAT & GFL & CCMTL & \textbf{GAMTL} & \textbf{RBF-GAMTL} \\
\midrule
$\tt{Syn}~1$ & $5.996\pm0.859$  & $6.046 \pm 0.797$ & $8.361 \pm 0.992$ & $6.016\pm 0.919$ & $6.750 \pm 1.019$ & $\underline{5.609}\pm0.977$ & $\mathbf{5.595}\pm0.983$ & $7.100\pm0.465$\\
$\tt{Syn}~2$ & $3.451\pm0.563$  & $3.584 \pm 0.548$ & $4.533 \pm 0.915$ & $3.188\pm 0.652$ & $3.993 \pm 0.660$ & $\underline{3.175}\pm0.650$ & $\mathbf{3.164}\pm0.649$ & $4.092\pm 0.600$\\
\bottomrule
\end{tabular}
\end{table*}

\begin{figure*}
\centering
\subfloat[MTRL]{{\includegraphics[width=0.23\textwidth]{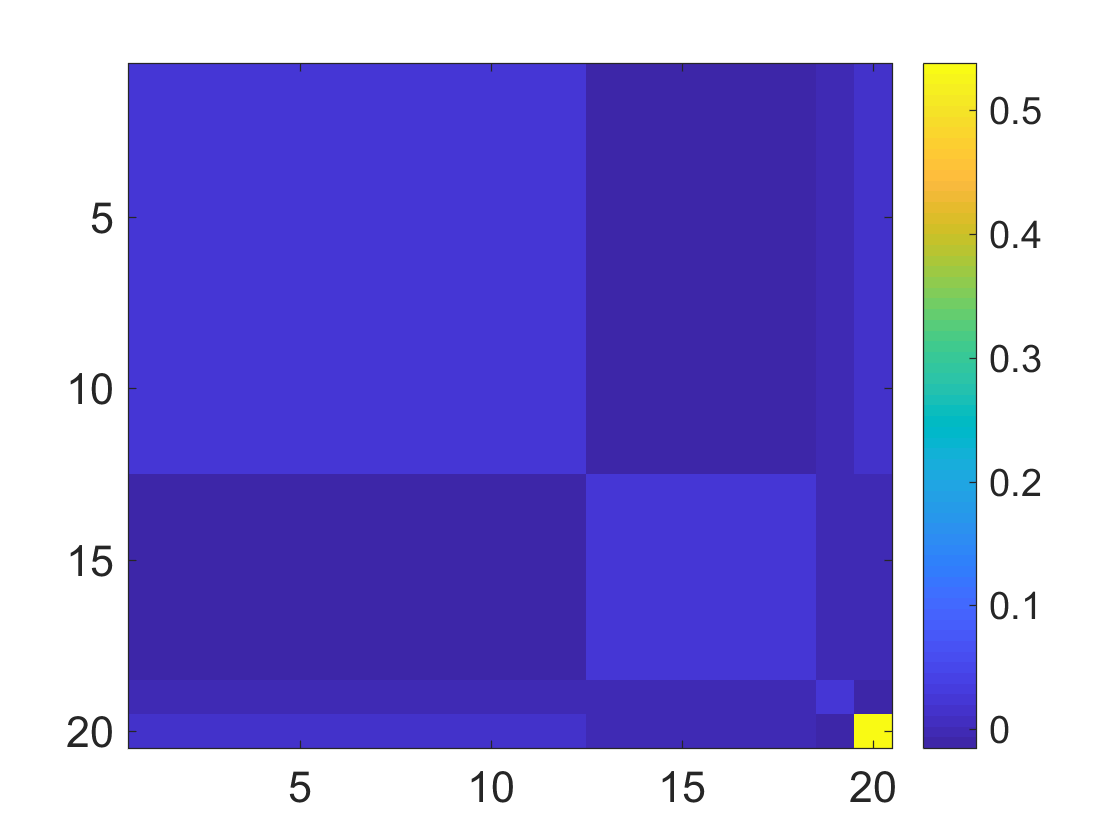} }}%
\hfill
\subfloat[MSSL]{{\includegraphics[width=0.23\textwidth]{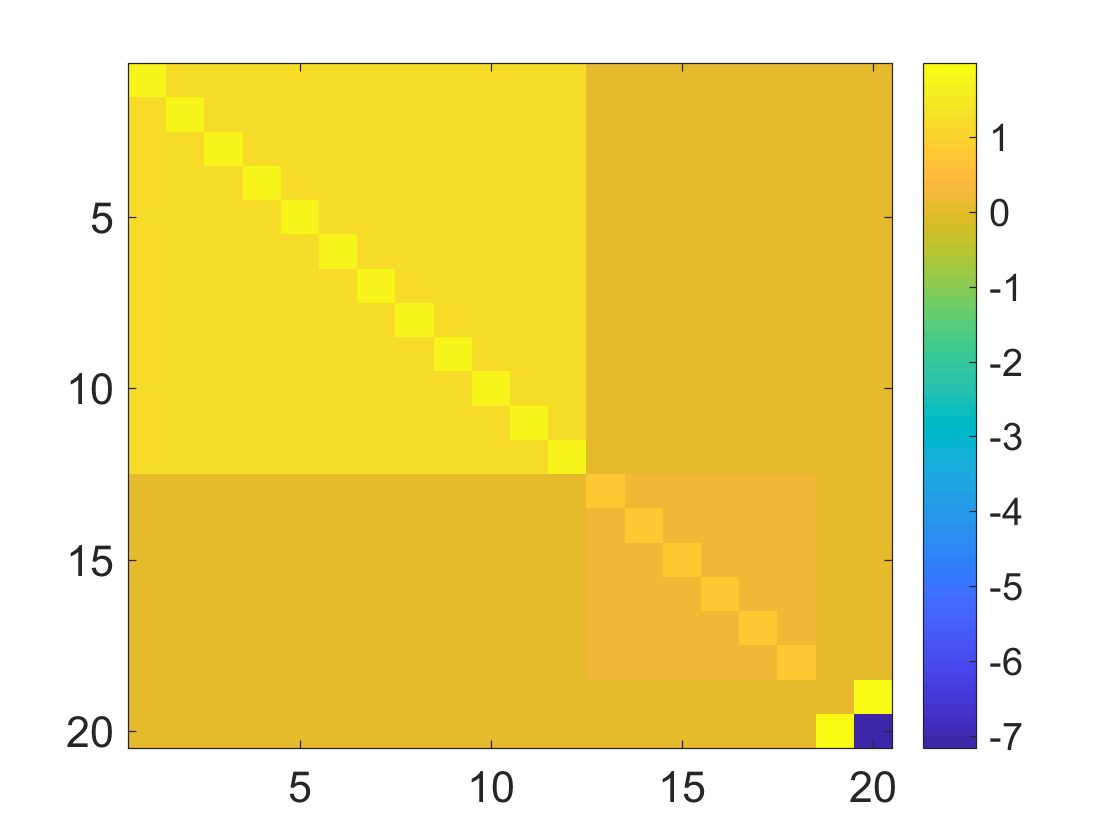} }}%
\hfill
\subfloat[BMSL]{{\includegraphics[width=0.23\textwidth]{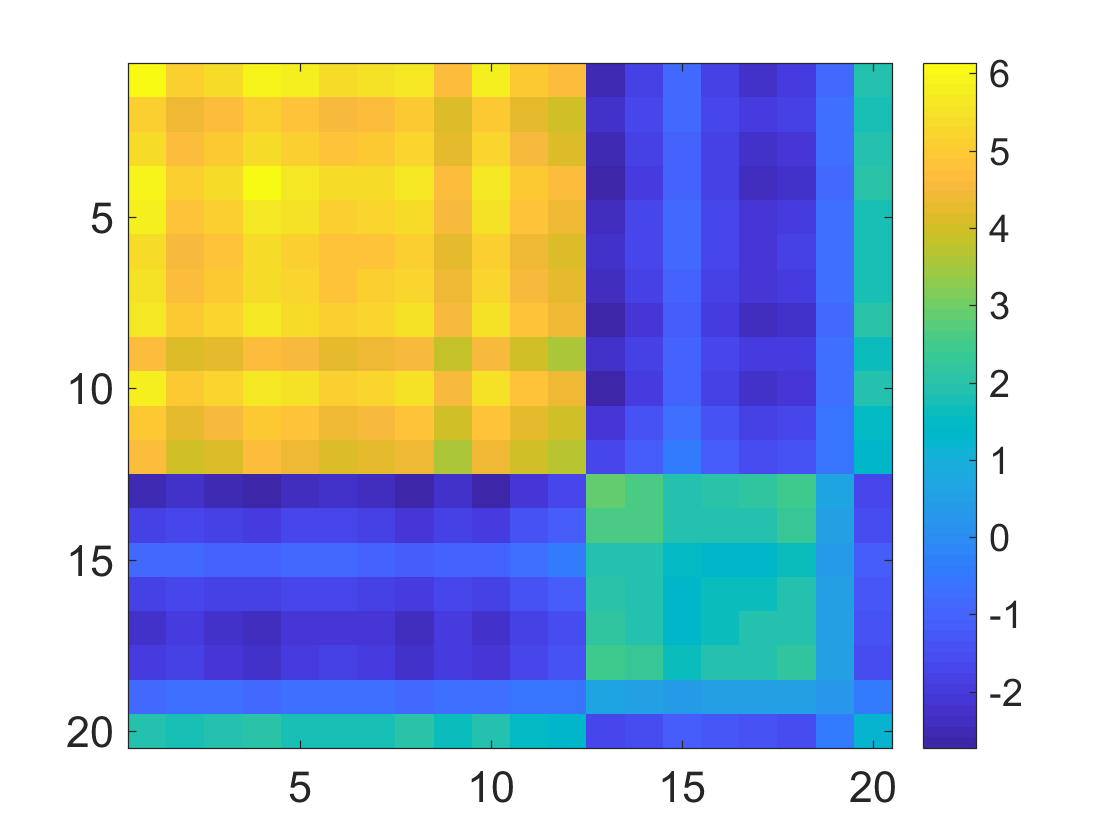} }}
\hfill
\subfloat[TAT]{{\includegraphics[width=0.18\textwidth]{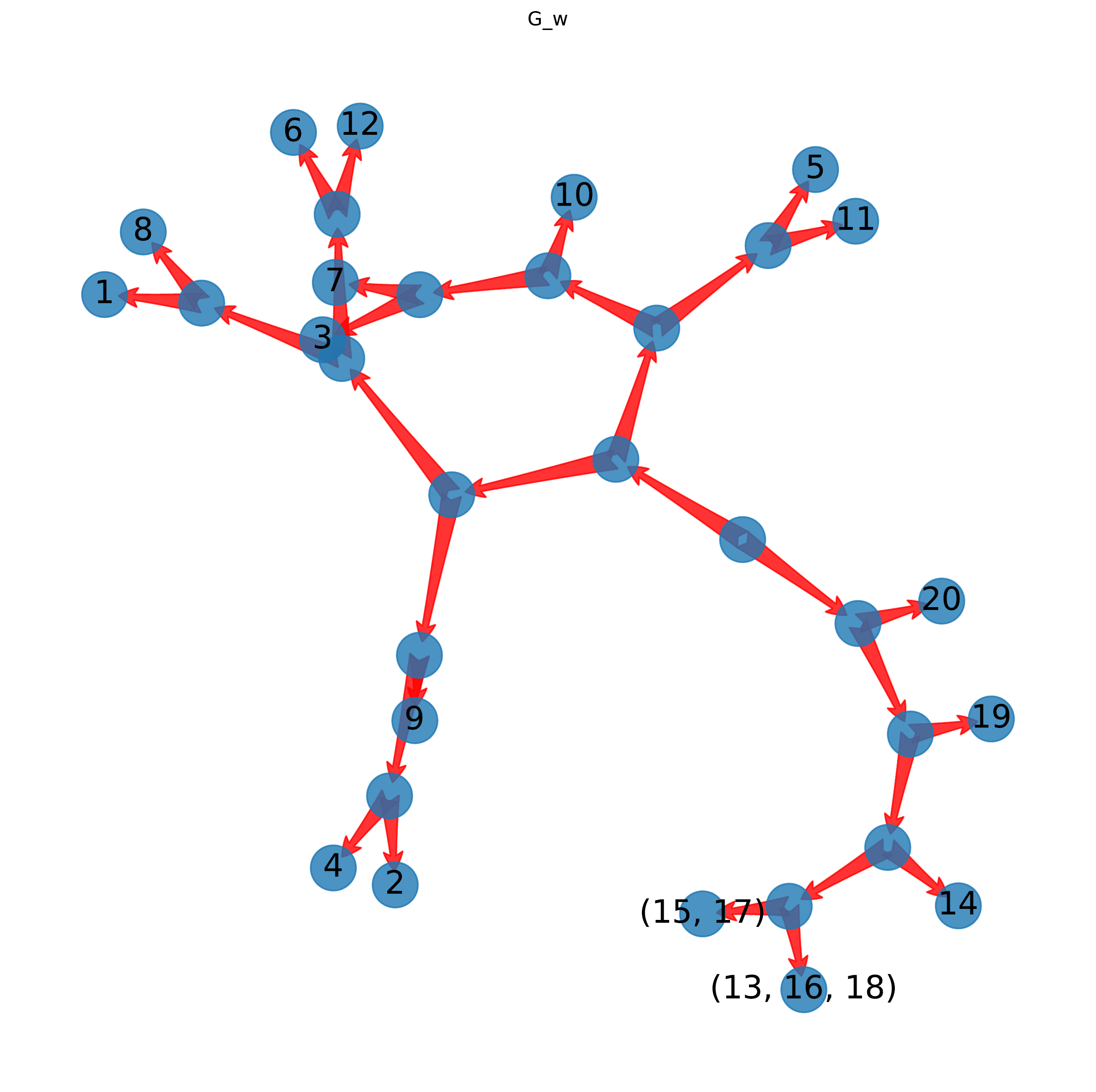} }}\\
\subfloat[GFL]{{\includegraphics[width=0.23\textwidth]{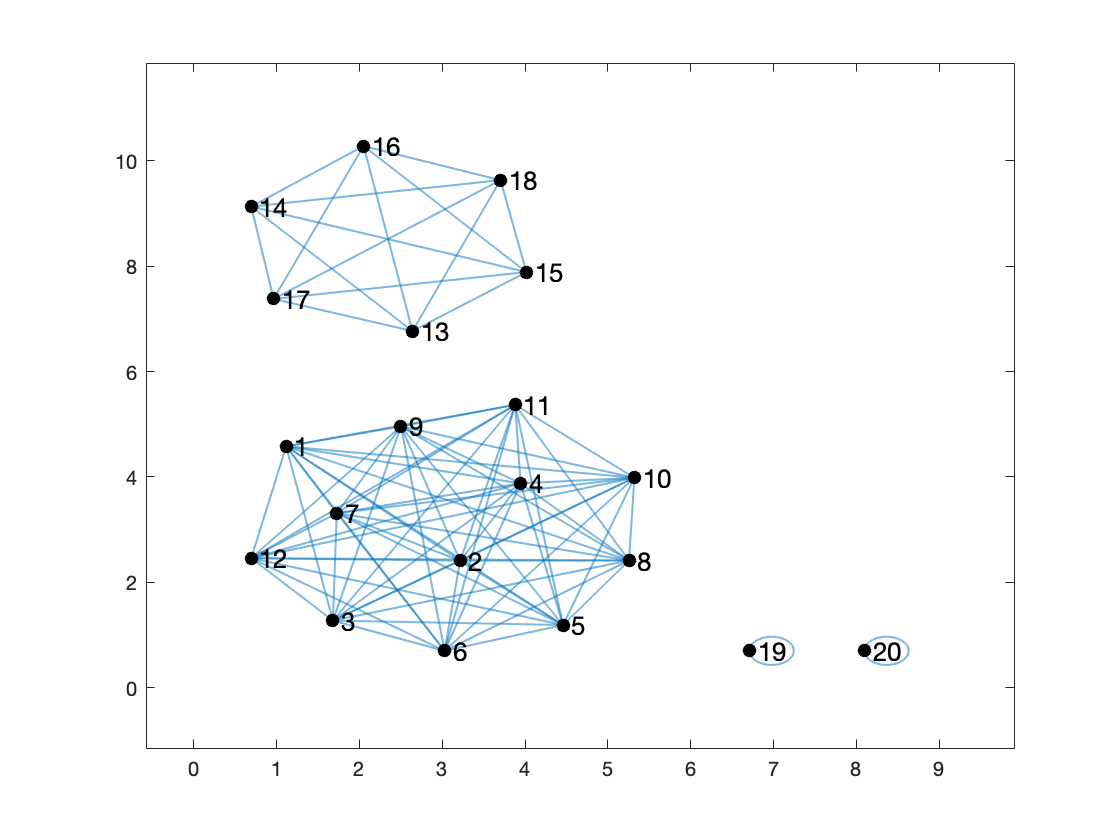} }}
\hfill
\subfloat[CCMTL]{{\includegraphics[width=0.23\textwidth]{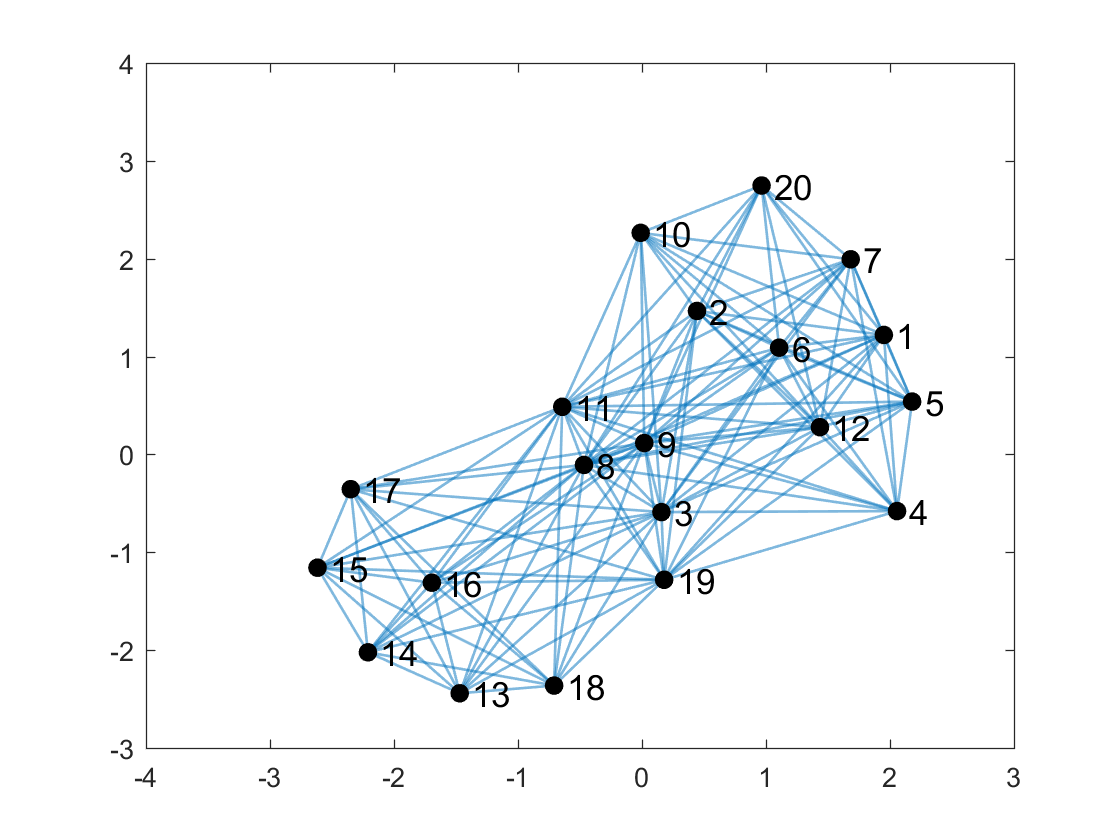} }}%
\hfill
\subfloat[GAMTL]{{\includegraphics[width=0.23\textwidth]{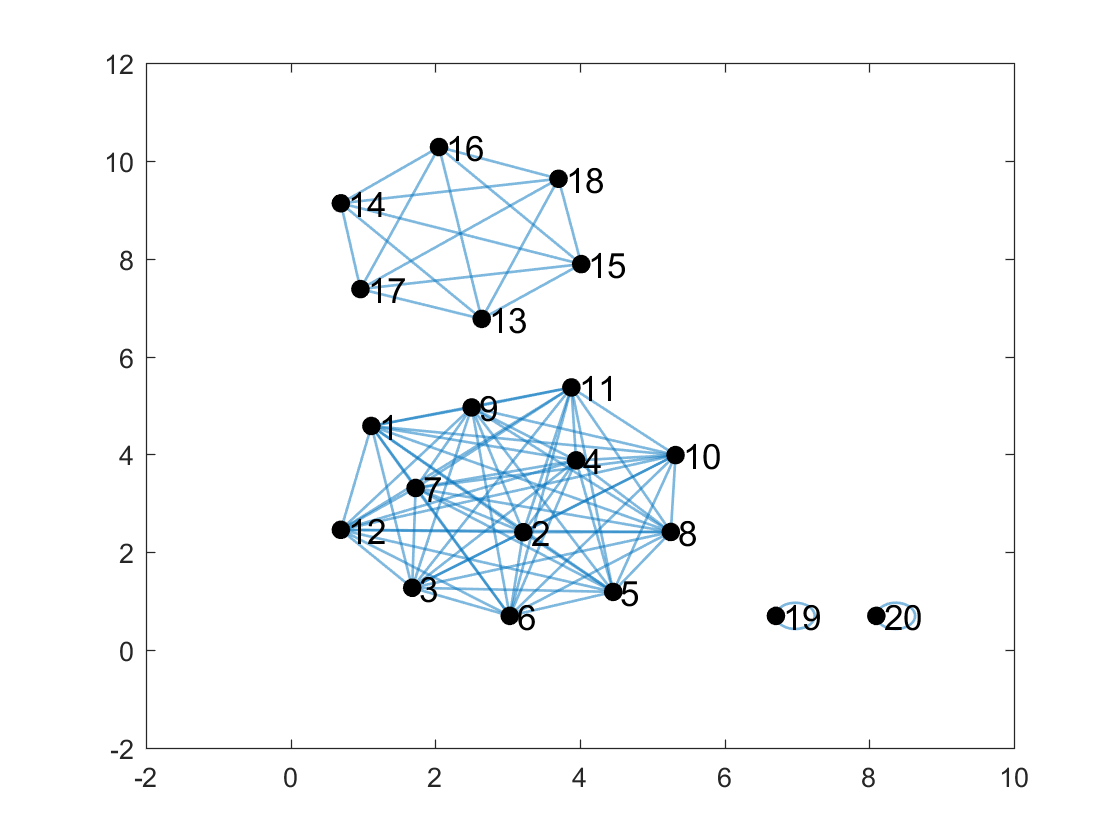} }}%
\hfill
\subfloat[RBF-GAMTL]{{\includegraphics[width=0.23\textwidth]{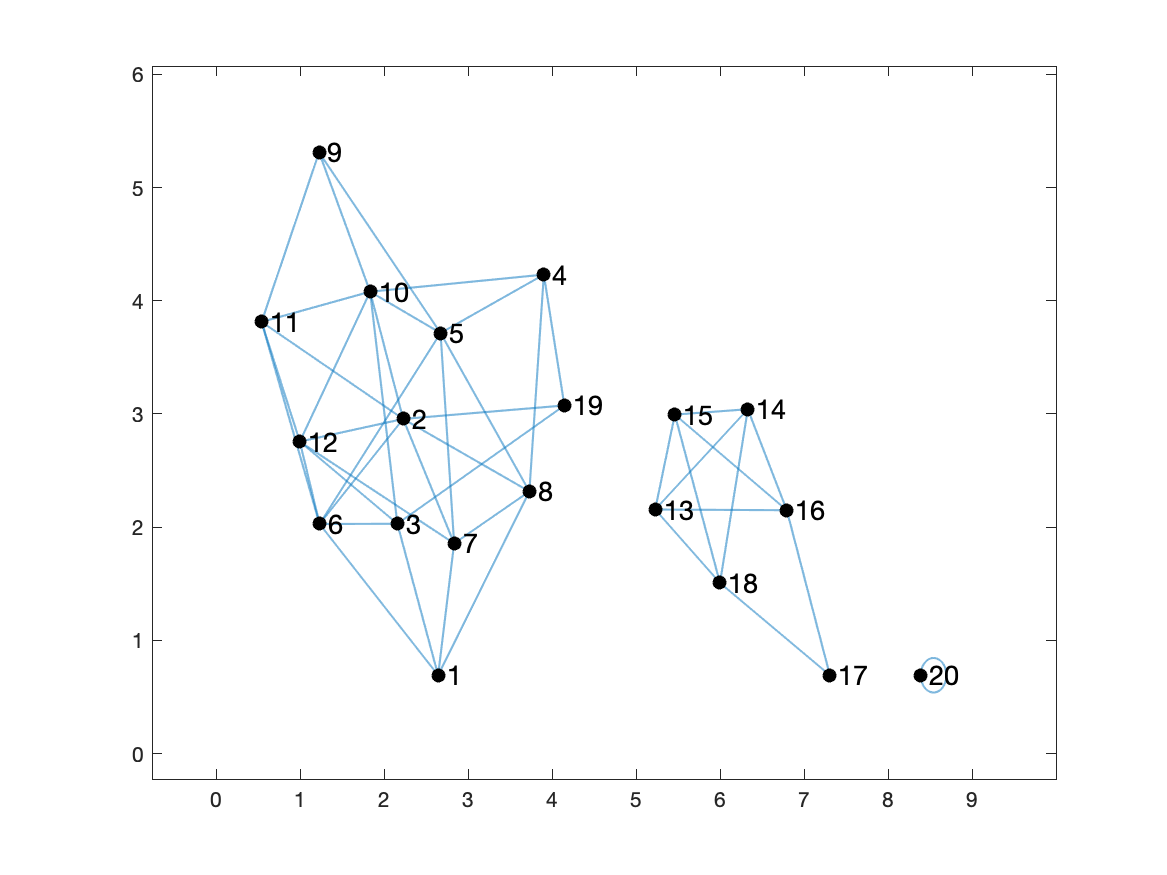}}}%
\caption{The task structure on $\tt{Syn}~1$ learned by (a) MTRL (task covariance matrix); (b) MSSL (inverse of task precision matrix); (c) BMSL (task covariance matrix); (d) TAT (tree structure, where tasks are leaf nodes); (e) GFL (correlation coefficient graph); (f) CCMTL ($k$-NN graph); (g) GAMTL (interpretable graph); and (h) RBF-GAMTL (interpretable graph). We use self-loop to underscore outliers.}
\label{fig:graph_topology_data1}
\end{figure*}

\begin{figure*}
\centering
\subfloat[MTRL]{{\includegraphics[width=0.23\textwidth]{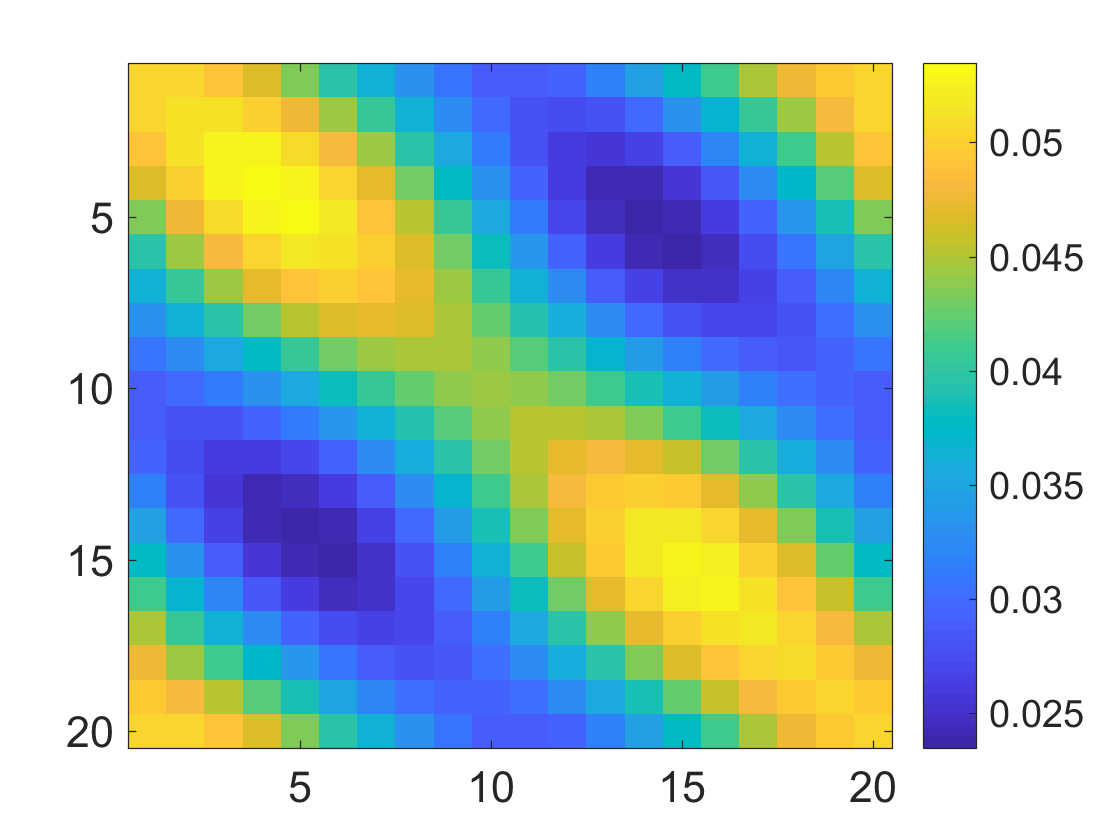} }}%
\hfill
\subfloat[MSSL]{{\includegraphics[width=0.23\textwidth]{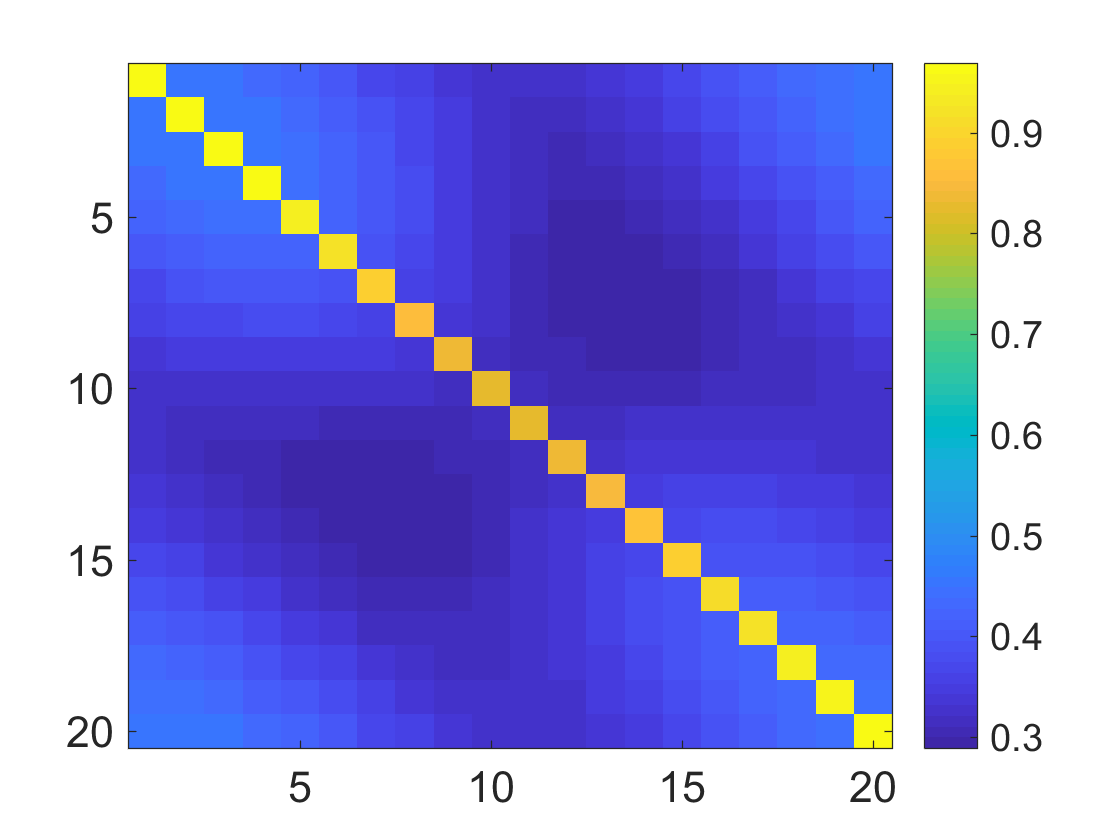} }}%
\hfill
\subfloat[BMSL]{{\includegraphics[width=0.23\textwidth]{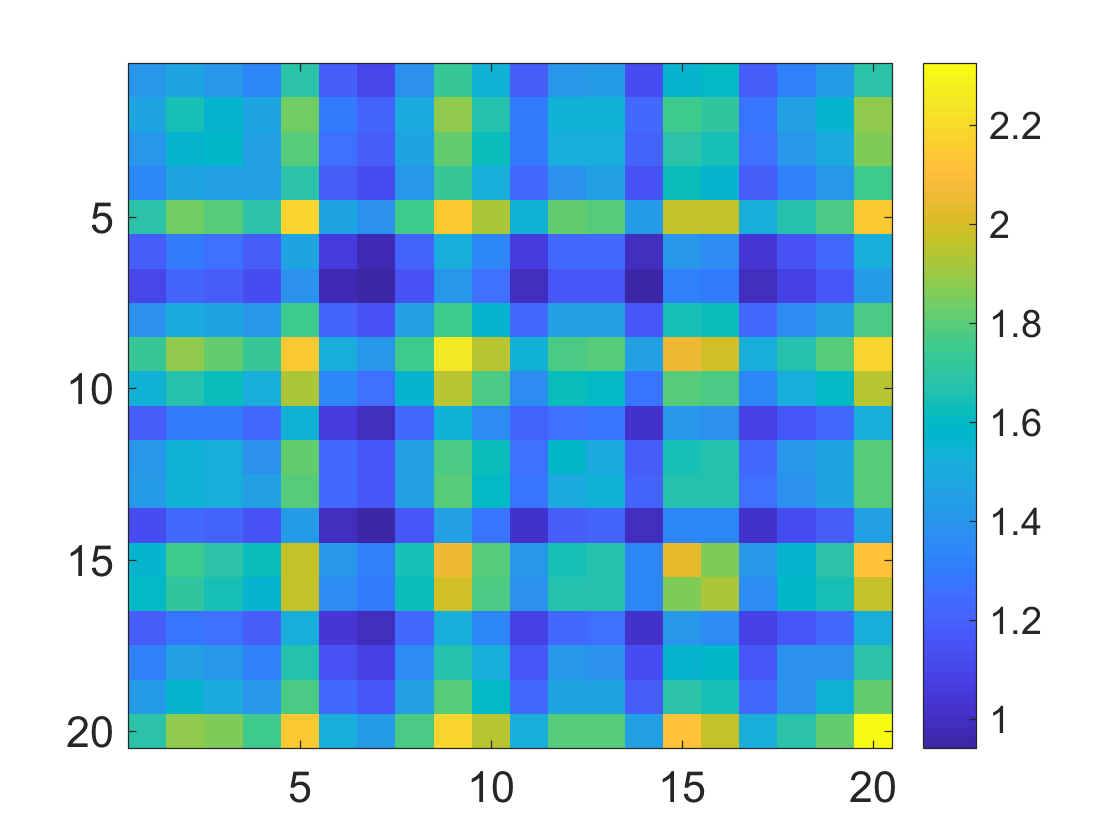} }}
\hfill
\subfloat[TAT]{{\includegraphics[width=0.18\textwidth]{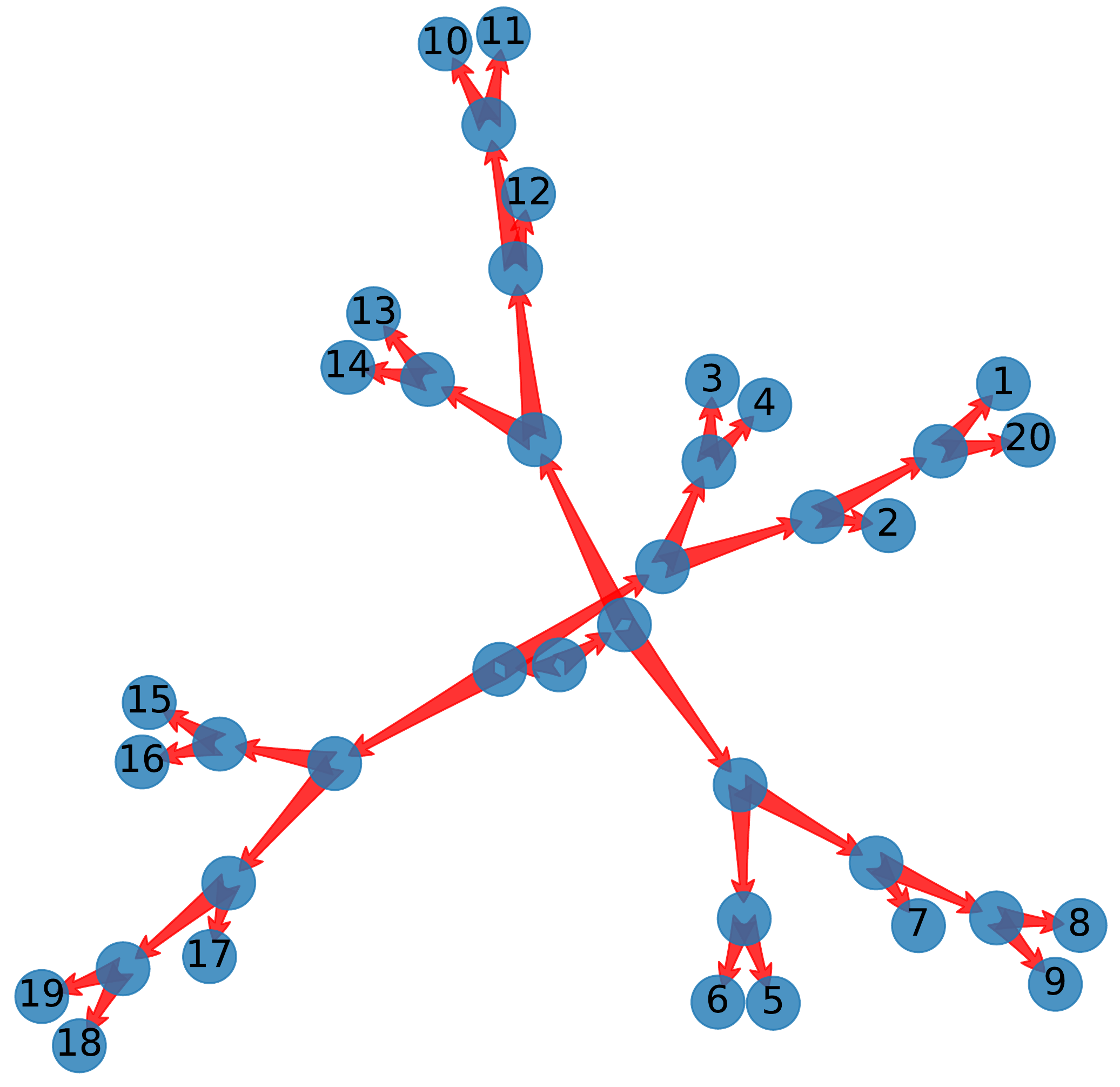} }}\\
\subfloat[GFL]{{\includegraphics[width=0.23\textwidth]{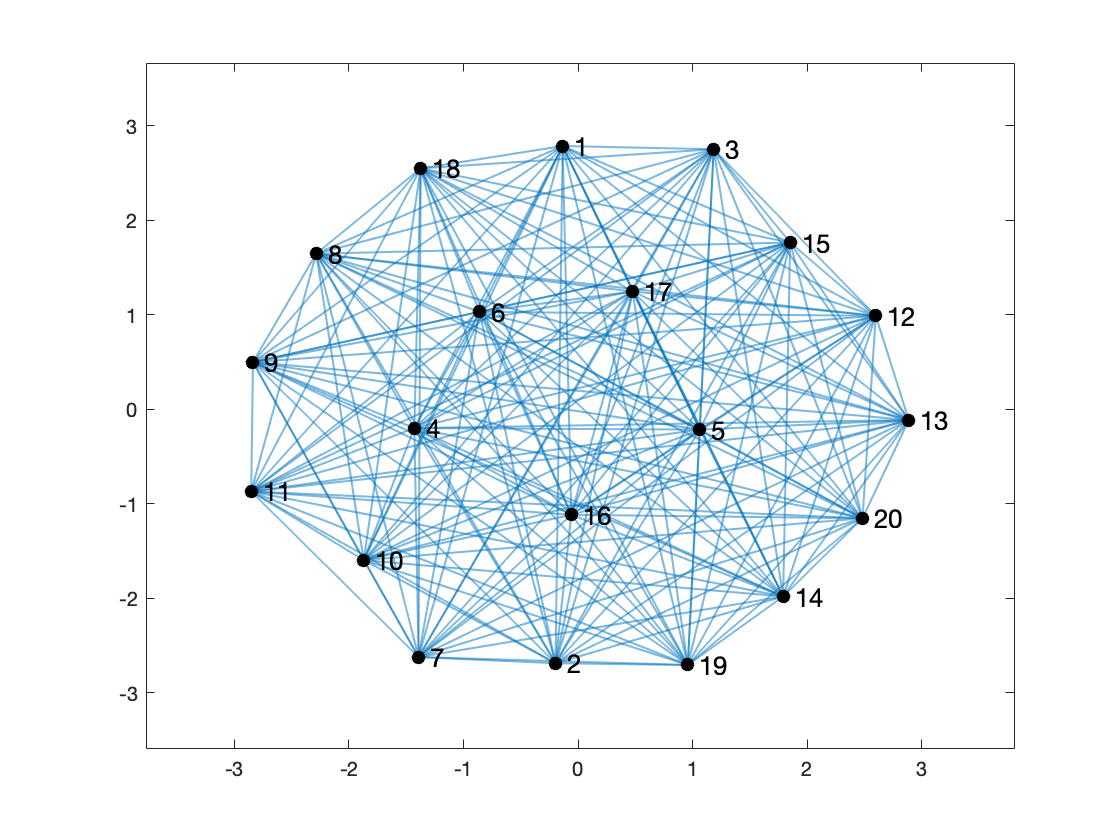} }}%
\hfill
\subfloat[CCMTL]{{\includegraphics[width=0.23\textwidth]{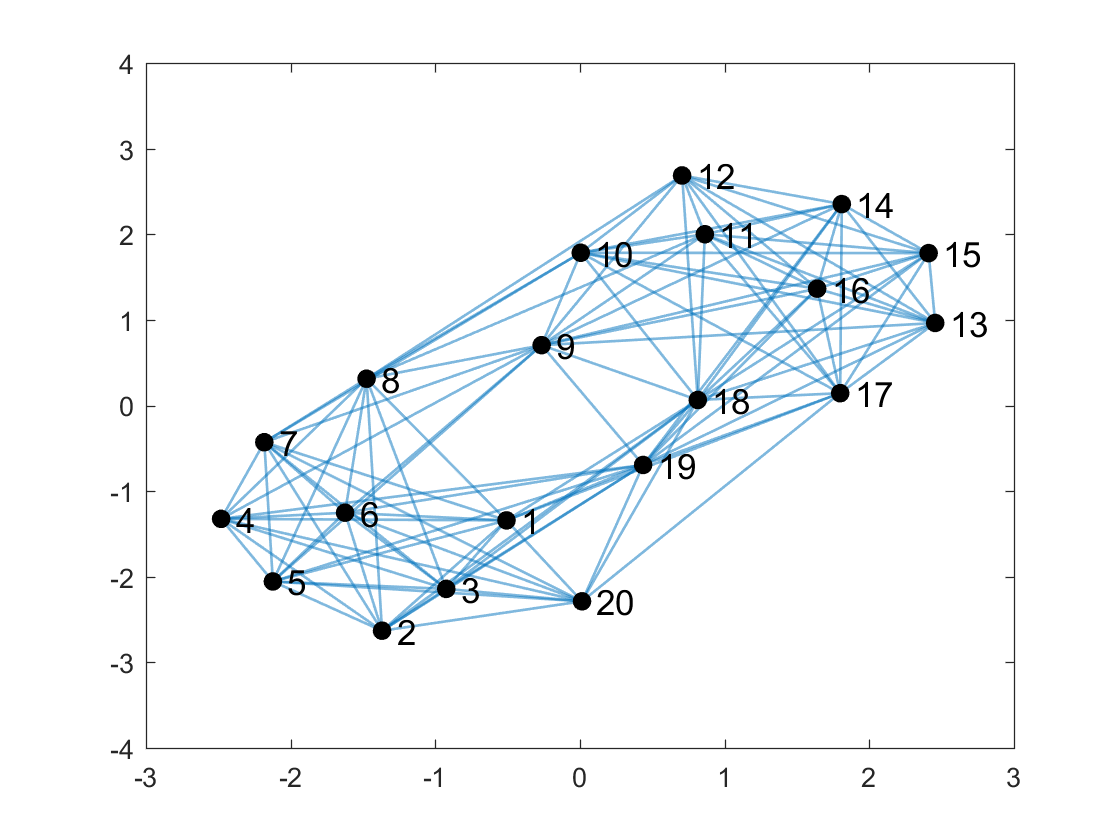} }}%
\hfill
\subfloat[GAMTL]{{\includegraphics[width=0.23\textwidth]{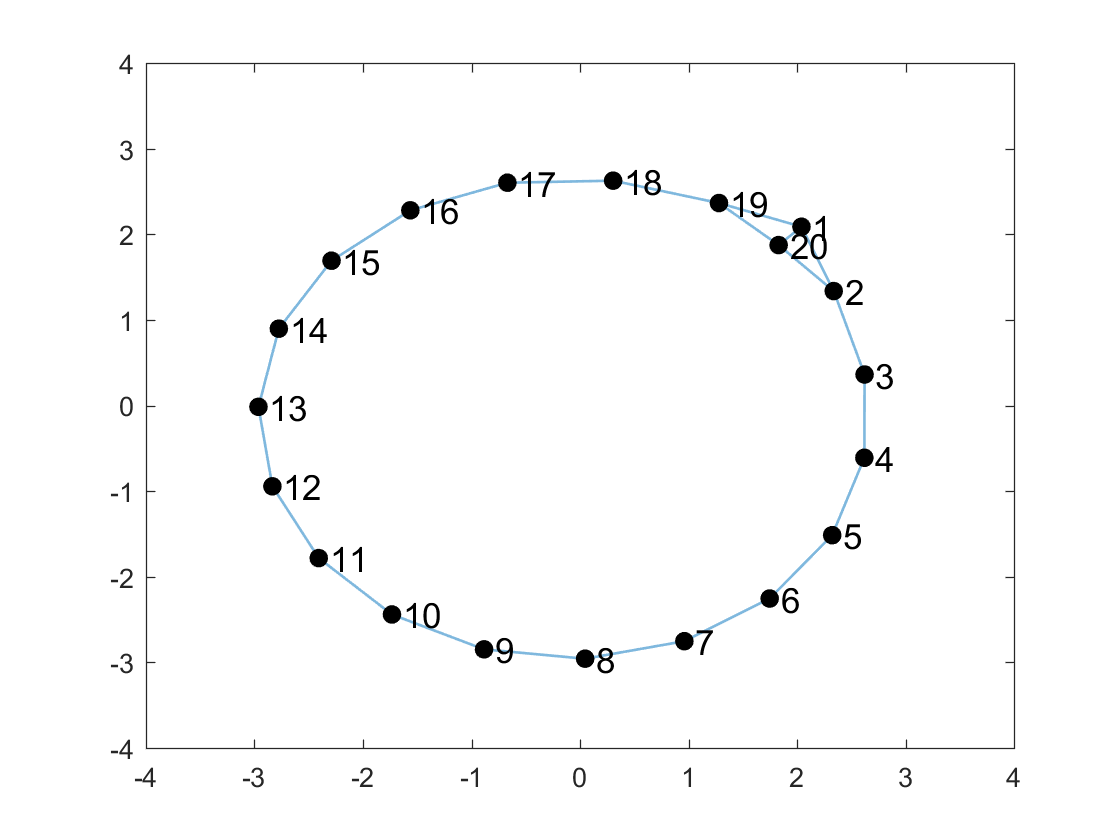} }}%
\hfill
\subfloat[RBF-GAMTL]{{\includegraphics[width=0.23\textwidth]{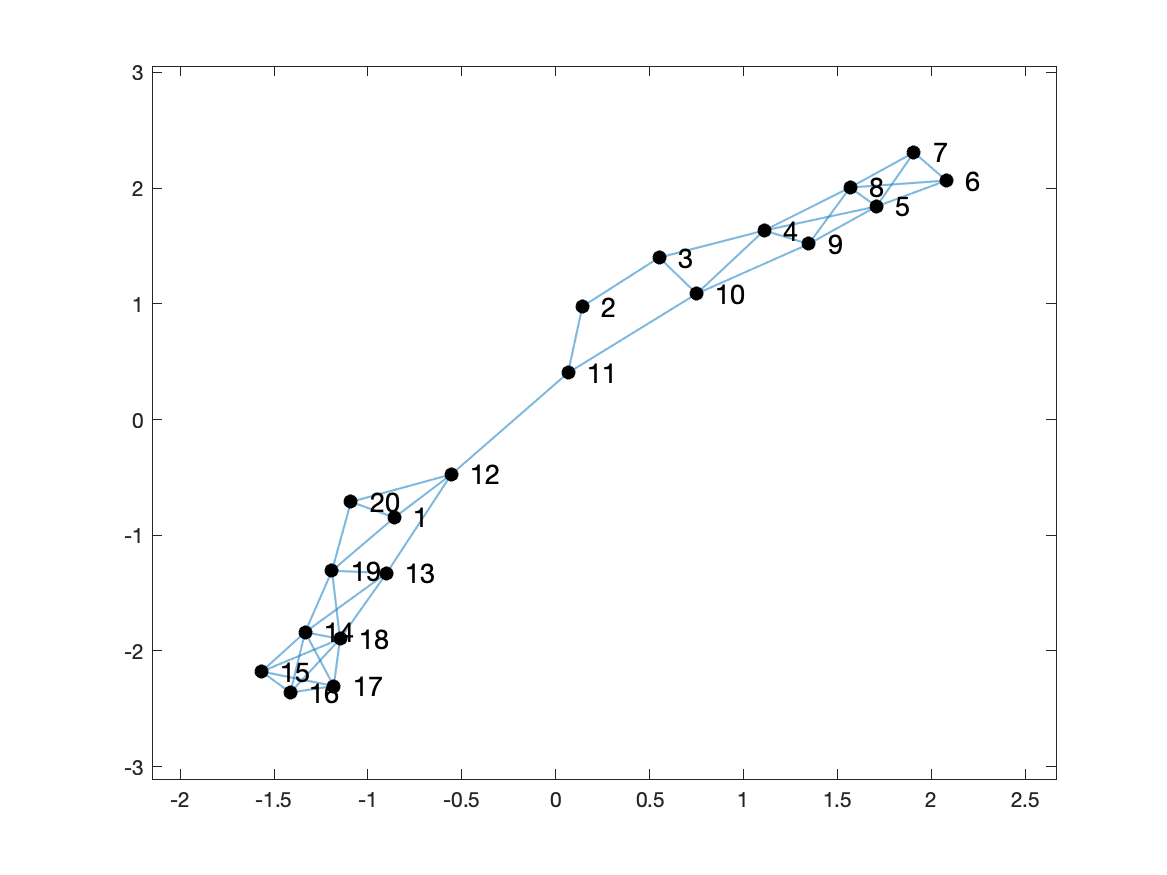} }}%
\caption{The task structure on $\tt{Syn}~2$ learned by (a) MTRL (task covariance matrix $\Omega$); (b) MSSL (inverse of task precision matrix); (c) BMSL (task covariance matrix); (d) TAT (tree structure, where tasks are leaf nodes); (e) GFL (correlation coefficient graph); (f) CCMTL ($k$-NN graph); (g) GAMTL (interpretable graph); and (h) RBF-GAMTL (interpretable graph).}
\label{fig:graph_topology_data2}
\end{figure*}

\subsection{Real-World Applications}

We then present three solid examples to demonstrate the utility and superiority of our GAMTL and RBF-GAMTL on real-world applications, involving bioinformatics, smart transportation, and signal and system. The performance of GFL is omitted in this section, mainly because GFL assumes that all tasks have the same input, which does not hold true in the general setup of multi-task learning.



\subsubsection{Parkinson's disease assessment}

This is a benchmark multi-task regression data set\footnote{\url{https://archive.ics.uci.edu/ml/datasets/parkinsons+telemonitoring}}, comprising a range of biomedical voice measurements taken from $42$ patients with early-stage Parkinson's disease. For each patient, the goal is to predict the motor Unified Parkinson's Disease Rating Scale (UPDRS) score based $18$-dimensional record: age, gender, and $16$ jitter and shimmer voice measurements. We treat UPDRS prediction for each patient as a task, resulting in $42$ tasks and $5,875$ observations in total.

The RMSE values of all competing methods with respect to different train/test ratios are summarized in Table~\ref{Tab:parkinson}. MTRL is unstable when training samples is less. GAMTL improves marginally over CCMTL, and is constantly superior to MMSL, BMSL and TAT. RBF-GAMTL significantly reduces the generalization error.

Before illustrating task structures generated by different methodologies, we first perform a preliminary study on the pairwise relatedness between any two tasks. To this end, let us suppose each task is represented by input $\mathbf{x}$ and output $y$, we model the relatedness between tasks $T_1$ and $T_2$ as the Kullback-Leibler (KL) divergence between their respective posterior distributions $p_1(y|\mathbf{x})$ and $p_2(y|\mathbf{x})$, i.e., $D_{\text{KL}}(p_1(y|\mathbf{x})||p_2(y|\mathbf{x}))$.
Intuitively, a small conditional divergence value indicates a strong relation and vise versa. We decompose $D_{\text{KL}}(p_1(y|\mathbf{x})||p_2(y|\mathbf{x}))$ by the Shannon's chain rule~\cite{mackay2003information} as $D_{\text{KL}}(p_1(\mathbf{x},y)||p_2(\mathbf{x},y))- D_{\text{KL}}(p_1(\mathbf{x})||p_2(\mathbf{x}))$ and estimate each term with an adaptive $k$NN estimator~\cite{wang2009divergence}. We project the generated conditional divergence matrix into a $3$d plane using multidimensional scaling (MDS) to form the graph coordinates.

The generated graphs by GAMTL and RBF-GAMTL (with $0.5$ train/test ratio) are plotted in Fig.~\ref{fig:parkinson_graph}(a) and \ref{fig:parkinson_graph}(b), respectively. In general, there is a close correspondence between two graphs and the conditional KL divergence: tasks with small divergences are likely to be grouped together and there is no abnormal connections between two tasks that are far away from each other. By contrast, a dense $k$-NN graph in CCMTL (see Fig.~\ref{fig:parkinson_graph}(c)) is hard to interpret, and the task relationship in MSSL (see Fig.~\ref{fig:parkinson_graph}(f)) is dominated by its diagonal (which suggests weak connections to other tasks). On the other hand, it is hard to discover useful patterns (such as outliers or groups of tasks) directly from the task covariance matrix generated by either BMSL (see Fig.~\ref{fig:parkinson_graph}(e)) or MTRL (see Fig.~\ref{fig:parkinson_graph}(g)). Moreover, if we look deeper, it seems that the generated matrix of BMSL or MTRL does not match well with conditional KL divergence. For example, tasks $30$ and $35$ are located away from most of other, GAMTL or RBF-GAMTL either identify them as outliers or only connect them with one or two edges, but both BMSL and MTRL suggest strong covariance between tasks $30$ and $35$ with other tasks.
Same as in the synthetic data, TAT is able to identify local relationships between tasks in the same subtree with a common parent node. However, it is hard for TAT to further quantitatively measure the global closeness between tasks from different subtrees.




\begin{figure*}[!hbpt]
\centering
\subfloat[GAMTL]{{\includegraphics[width=0.23\textwidth]{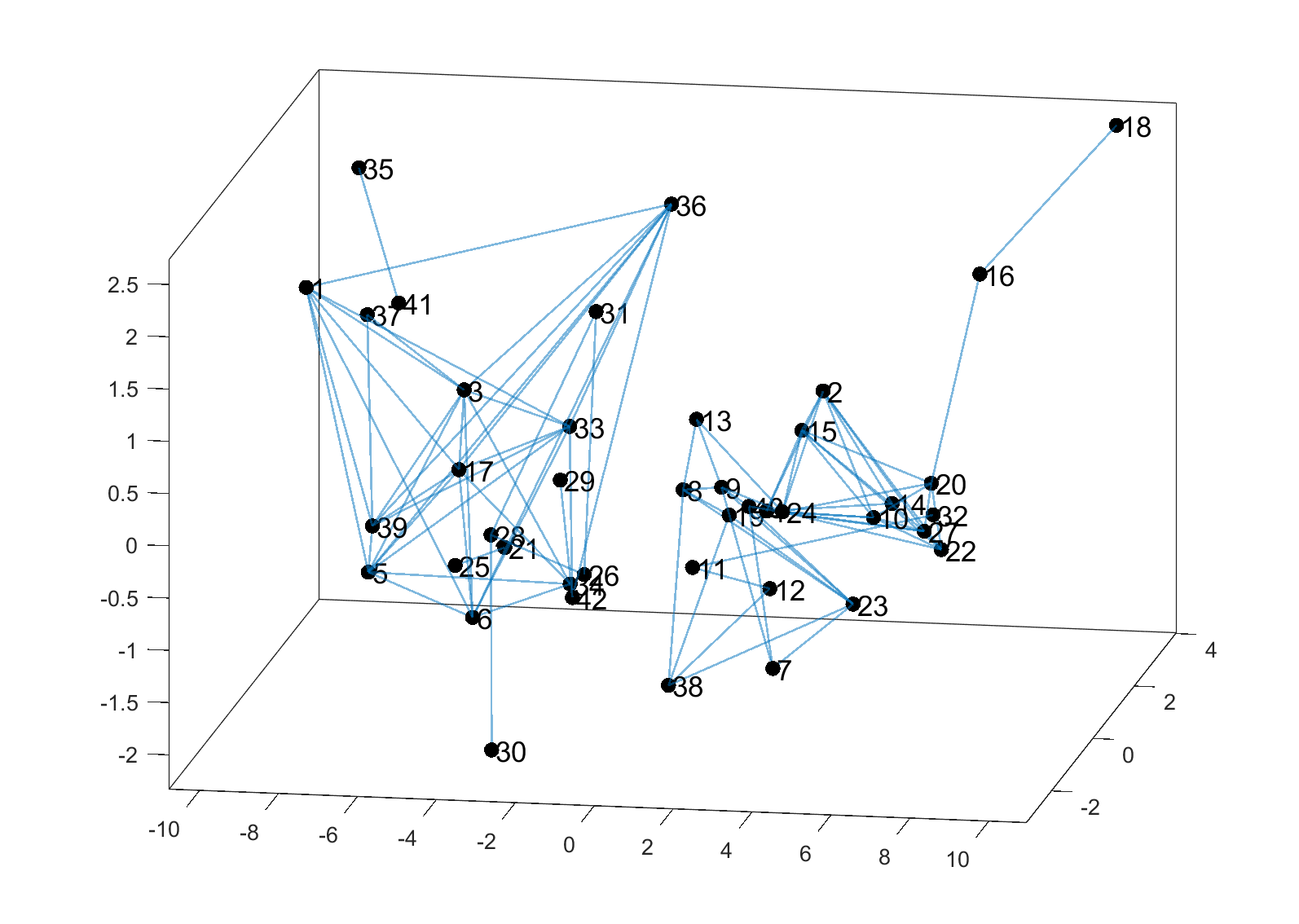}}}%
\hfill
\subfloat[RBF-GAMTL]{{\includegraphics[width=0.23\textwidth]{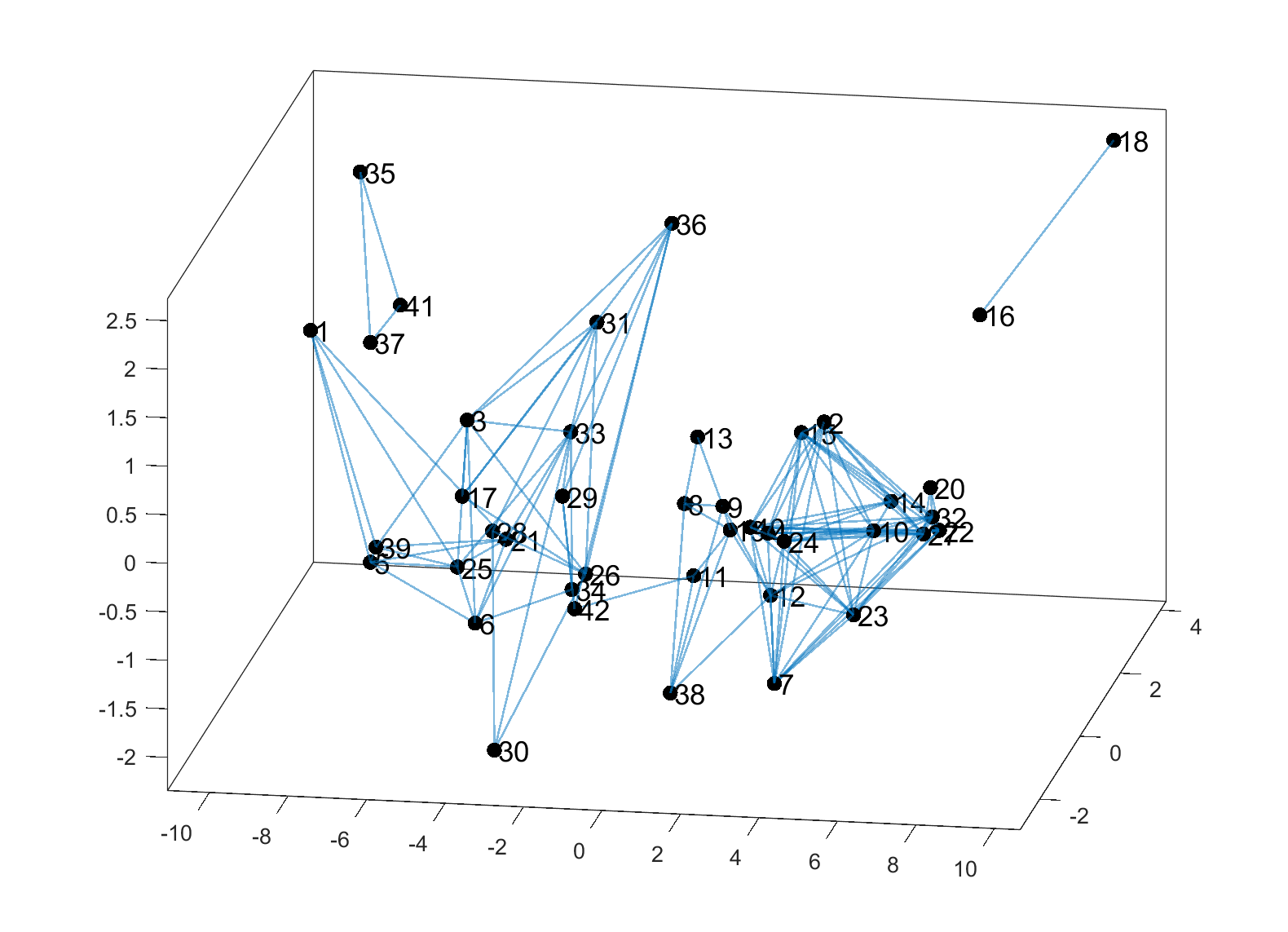}}}
\hfill
\subfloat[CCMTL]{{\includegraphics[width=0.23\textwidth]{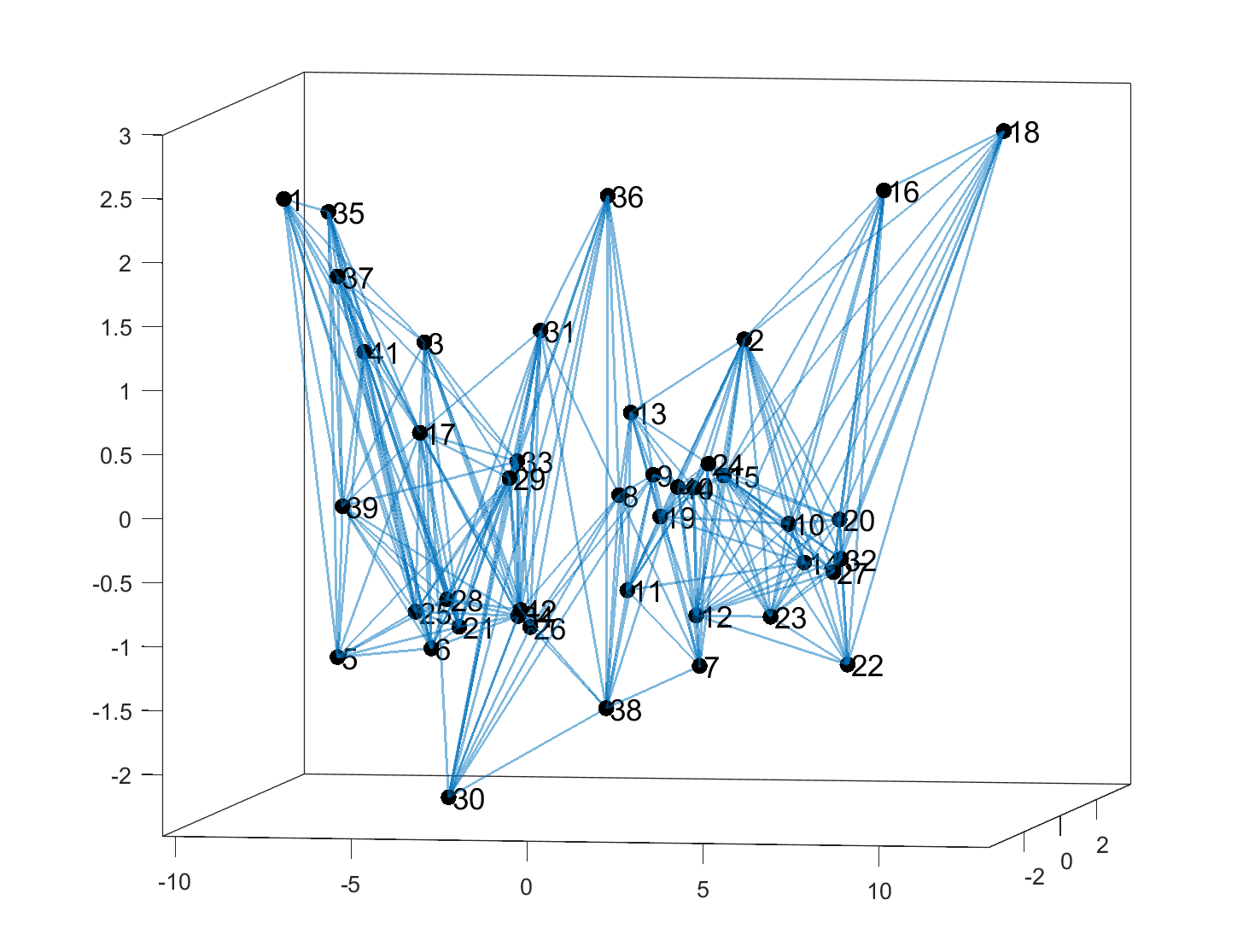} }} \\
\subfloat[TAT]{{\includegraphics[width=0.18\textwidth]{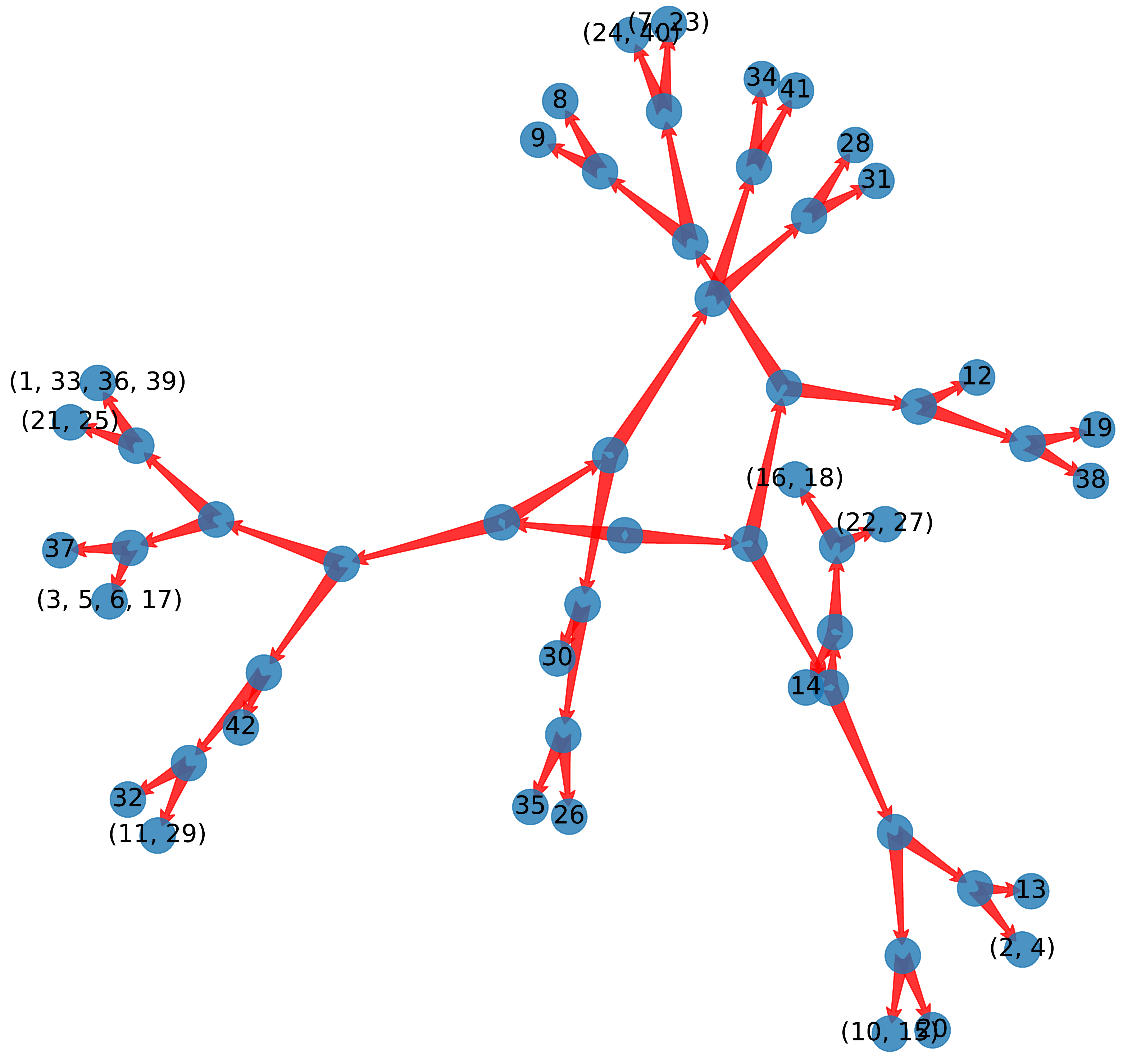}}}
\hfill
\subfloat[BMSL]{{\includegraphics[width=0.23\textwidth]{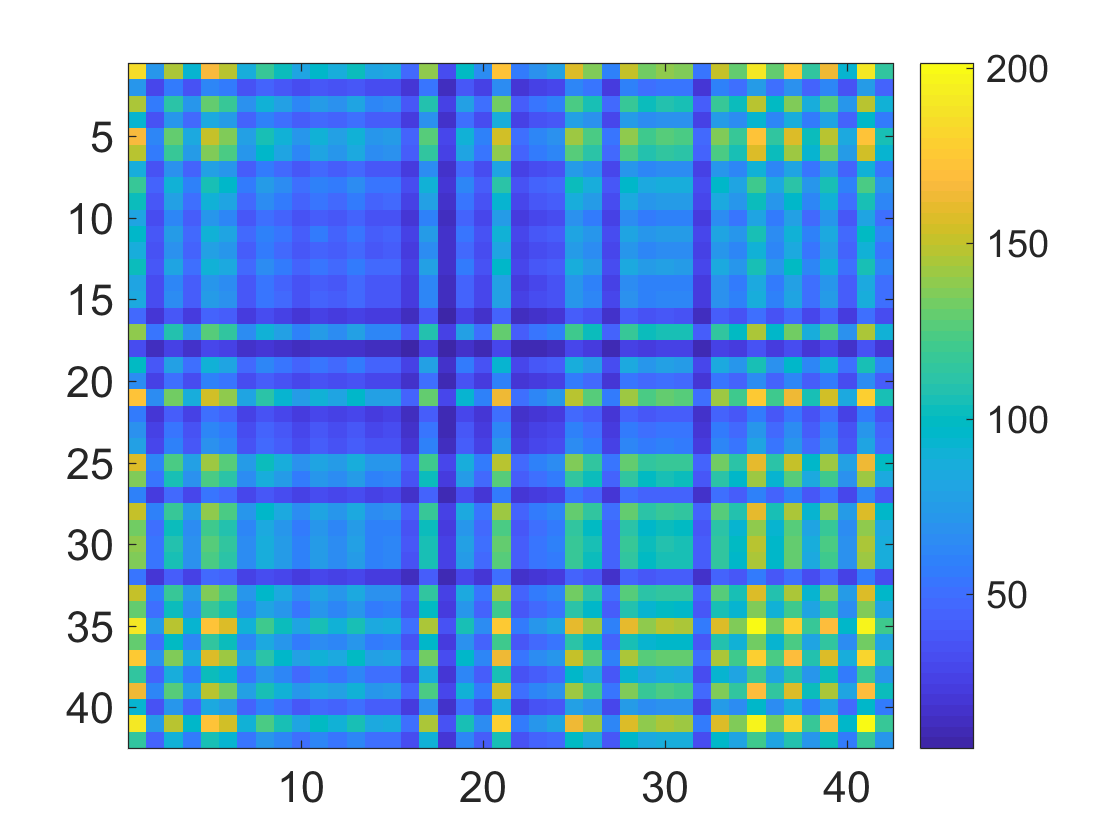} }}%
\hfill
\subfloat[MMSL]{{\includegraphics[width=0.23\textwidth]{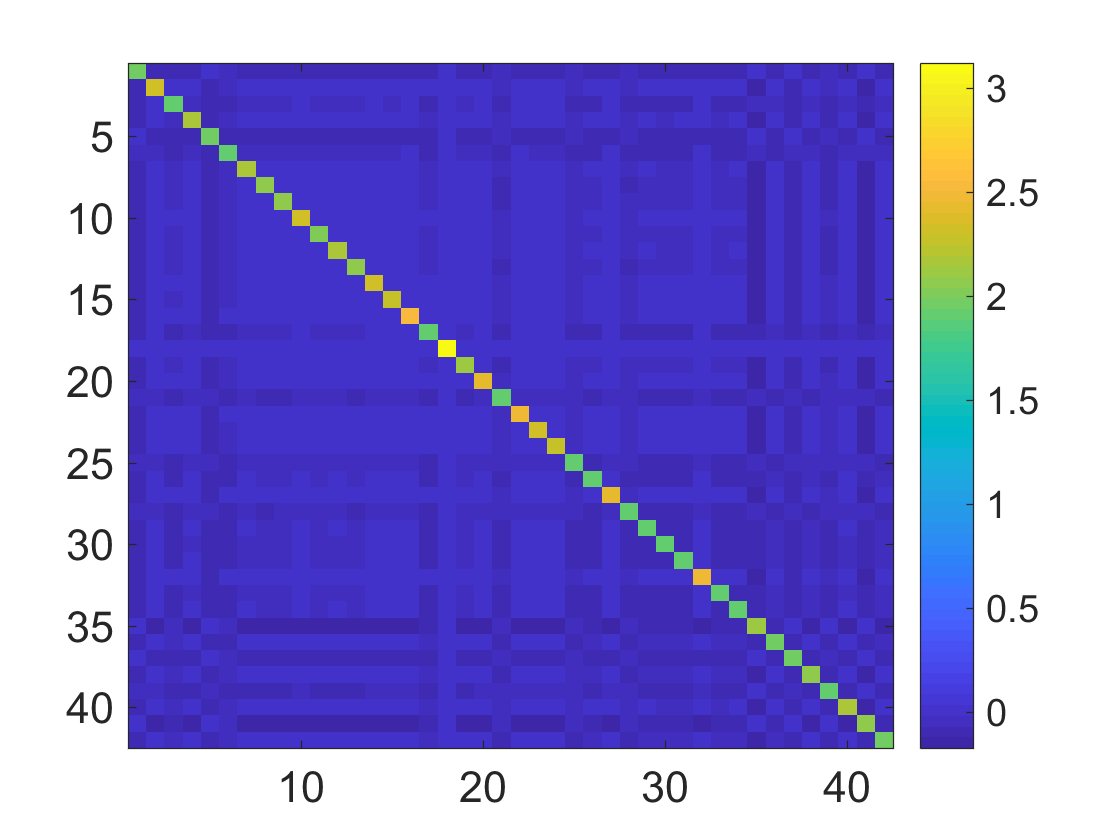} }}%
\hfill
\subfloat[MTRL]{{\includegraphics[width=0.23\textwidth]{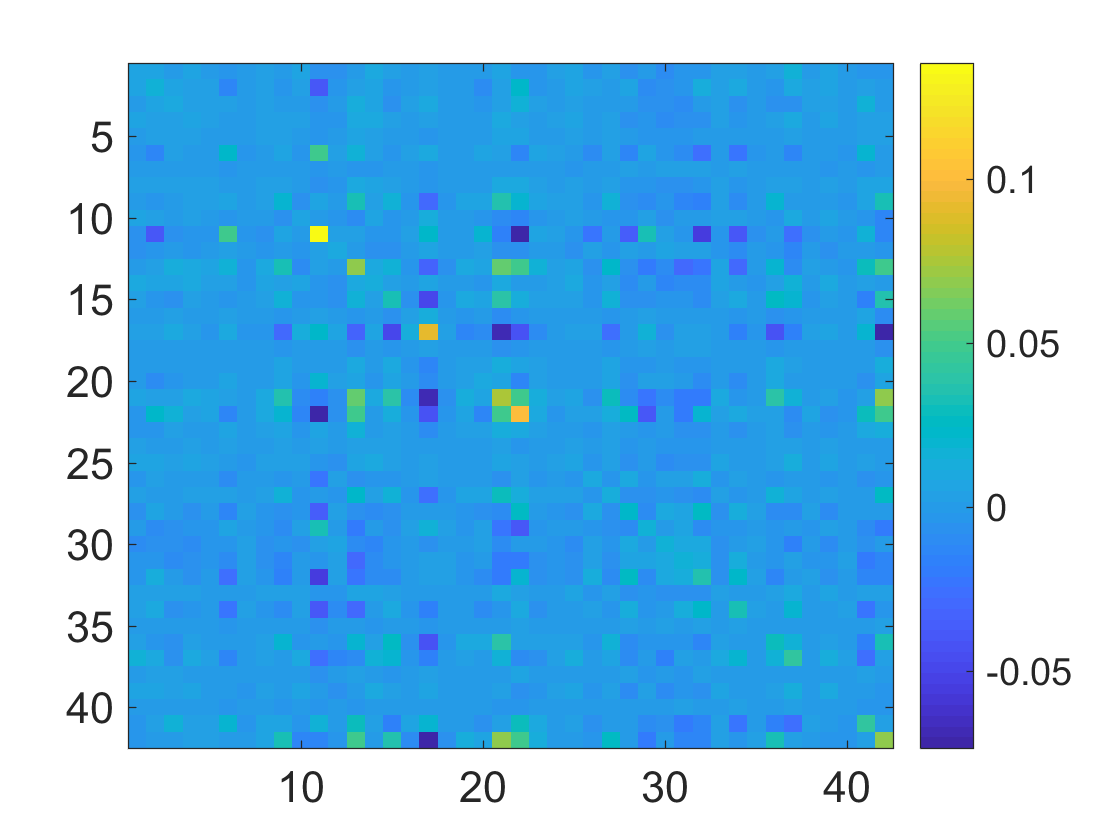} }}%
\caption{The task structure on Parkinson's disease data set learned by (a) GAMTL; (b) RBF-GAMTL; (c) CCMTL; (d) TAT; (e) BMSL; (f) MMSL; and (g) MTRL. Graph coordinates are generated by MDS over a dissimilarity matrix evaluated with (symmetric) conditional KL divergence.}
\label{fig:parkinson_graph}
\end{figure*}

\begin{table*}[!hbpt]
\centering
\small
\caption{RMSE (mean$\pm$std) on Parkinson's disease data set over $10$ independent runs with respect to different train/test ratios $r$. The best two performances are marked in bold and underlined, respectively.}\label{Tab:parkinson}
\begin{tabular}{cccccccc}
\toprule
 & MTRL & MSSL & BMSL & TAT & CCMTL & \textbf{GAMTL} & \textbf{RBF-GAMTL} \\
\midrule
$r=0.3$ & $4.147\pm3.038$ & $1.144\pm0.007$ & $1.221\pm0.110$ & $1.146\pm0.011 $ & $1.228\pm0.016$ & $\underline{1.121}\pm0.031$ & $\mathbf{0.609}\pm0.067$ \\
$r=0.4$ & $3.202\pm2.587$ & $1.129\pm0.011$ & $1.150\pm0.100$ & $ 1.130\pm0.010 $ & $1.149\pm0.013$ & $\underline{1.068}\pm0.009$ & $\mathbf{0.535}\pm0.060$ \\
$r=0.5$ & $1.761\pm0.850$  & $1.130\pm0.009$ & $1.110\pm0.085$ & $1.129\pm0.015$ & $1.115\pm0.011$ & $\underline{1.057}\pm0.013$ & $\mathbf{0.417}\pm0.046$ \\
$r=0.6$ & $1.045\pm0.050$ & $1.123\pm0.013$ & $1.068\pm0.036$ & $ 1.124\pm0.019 $ & $1.092\pm0.015$ & $\underline{1.037}\pm0.015$ & $\mathbf{0.367}\pm0.032$ \\
\bottomrule
\end{tabular}
\end{table*}

\subsubsection{Parking occupancy prediction in Birmingham, U.K.}
In the second application, we aim to simultaneously predict car parking occupancy rate ($0-100\%$) in multiple parking lots in the city of Birmingham in the U.K., and, at the same time, infer the spatial-temporal relationships across these parking lots. We treat the prediction task in each parking lot as an individual task. The raw data was published by the Birmingham City Council\footnote{\url{https://data.birmingham.gov.uk/dataset/birmingham-parking}} (BCC) under the Open Government License v$3.0$ and was updated every $30$ minutes from $8:00$ to $16:30$ (18 occupancy values per parking lot and day). Here, we use a cleaned data set in~\cite{stolfi2017predicting}, comprising valid occupancy rates of $29$ car parking lots operated by National Car Parks (NCP) from Oct. $4$\ts{th} $2016$ to Dec. $19$\ts{th} $2016$ ($11$ weeks). For each parking lot, we build the dataset by using the occupancy rates of previous $2$ hours (or $4$ hours) as input to predict the occupancy rate of $30$ minutes in advance, resulting in $35,456$ samples in total.

According to the raw data from BCC, the parking lots $8$, $9$ and $17$ have the same longitude and latitude (approximate to $13$ decimal places). Meanwhile, in order to gauge the quality of our generated graph, we apply the constraint Dynamic Time Warping (cDTW)~\cite{sakoe1978dynamic} on pairwise occupancy rate sequences from two parking lots to construct a dissimilarity matrix, and then apply MDS to project this dissimilarity matrix onto a 3d plane to form graph coordinates. In this sense, the grouped nodes in the graph also suggest the nearness from a time series clustering perspective.



We select the first week of observation (Oct. $4$\ts{th} to Oct. $10$\ts{th}) to train and left the remaining ten weeks for testing. The RMSE over $10$ repetitions are summarized Table~\ref{Tab:time_series}. The task structure learned by all competing methods are demonstrated in Figs.~\ref{fig:Birmingham_graph}. Obviously, GAMTL and RBF-GAMTL identify similar and highly interpretable patterns on task relatedness, which also has a close correspondence to cDTW. Again, MMSL is dominated by its diagonal matrix. BMSL and MTRL suffer from poor interpretability on their respective task covariance matrix. The generated tree from TAT can group locally similar tasks, but it does not identify critical global structures or provide quantitative measures on connections among tasks (especially for those which are originated from the same parent node but split in deeper layers of the tree).

\begin{figure*}
\centering
\subfloat[GAMTL]{{\includegraphics[width=0.23\textwidth]{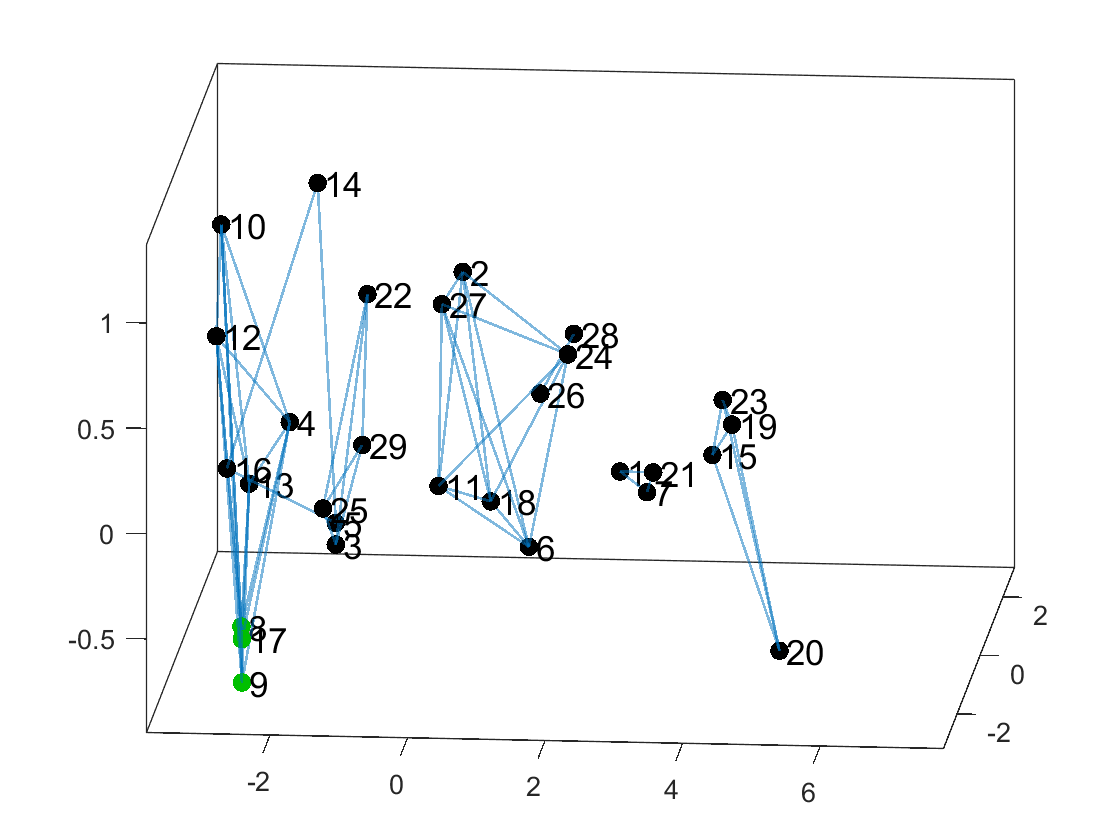} }}
\hfill
\subfloat[RBF-GAMTL]{{\includegraphics[width=0.23\textwidth]{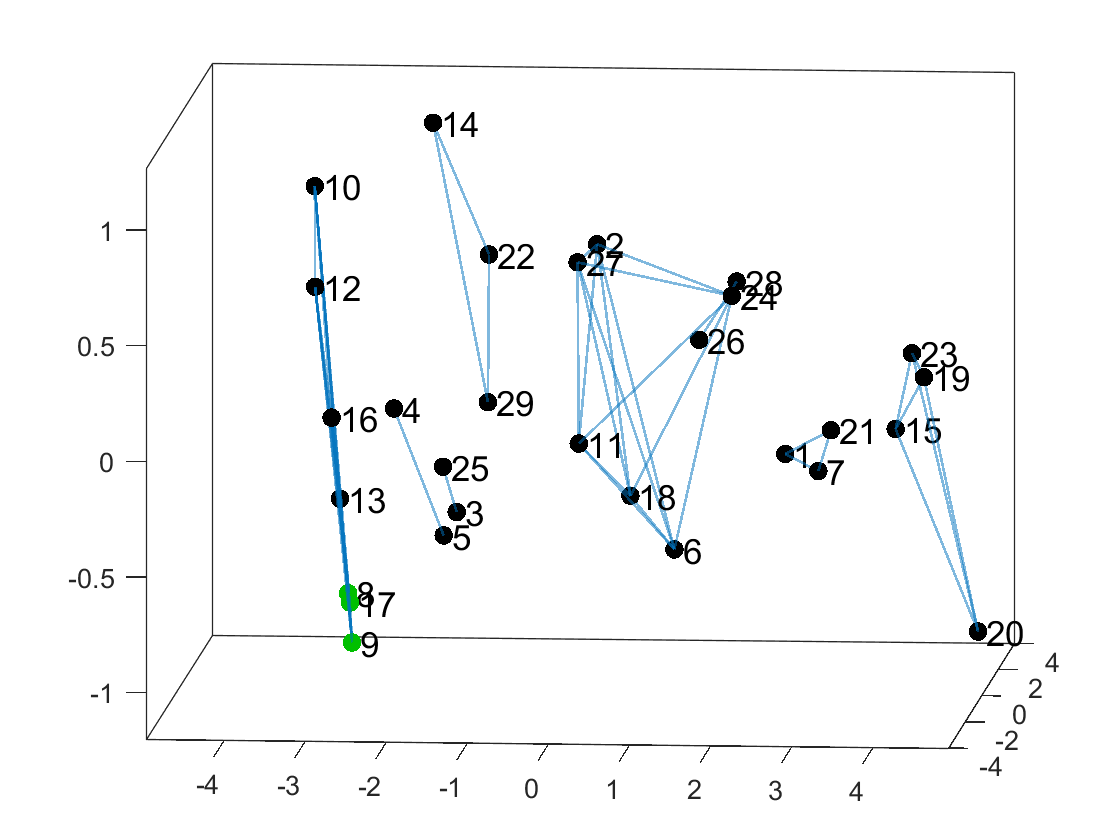} }}
\hfill
\subfloat[CCMTL]{{\includegraphics[width=0.23\textwidth]{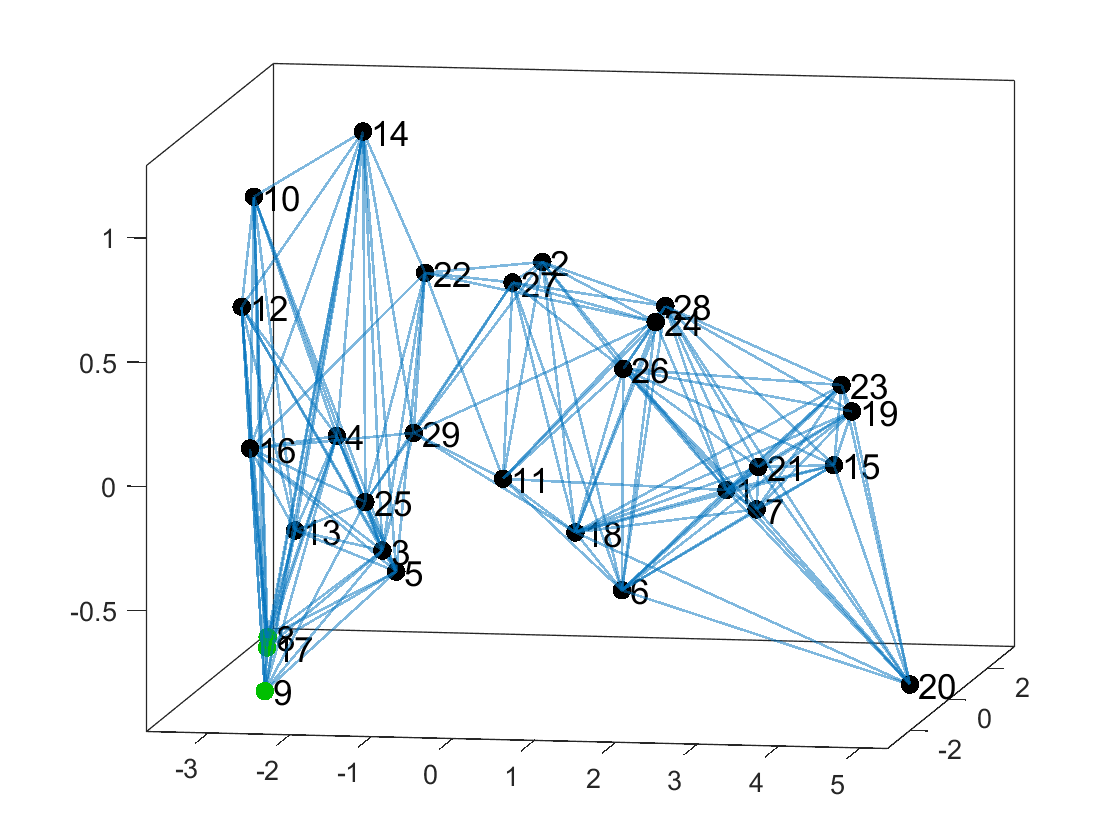} }} \\
\subfloat[TAT]{{\includegraphics[width=0.18\textwidth]{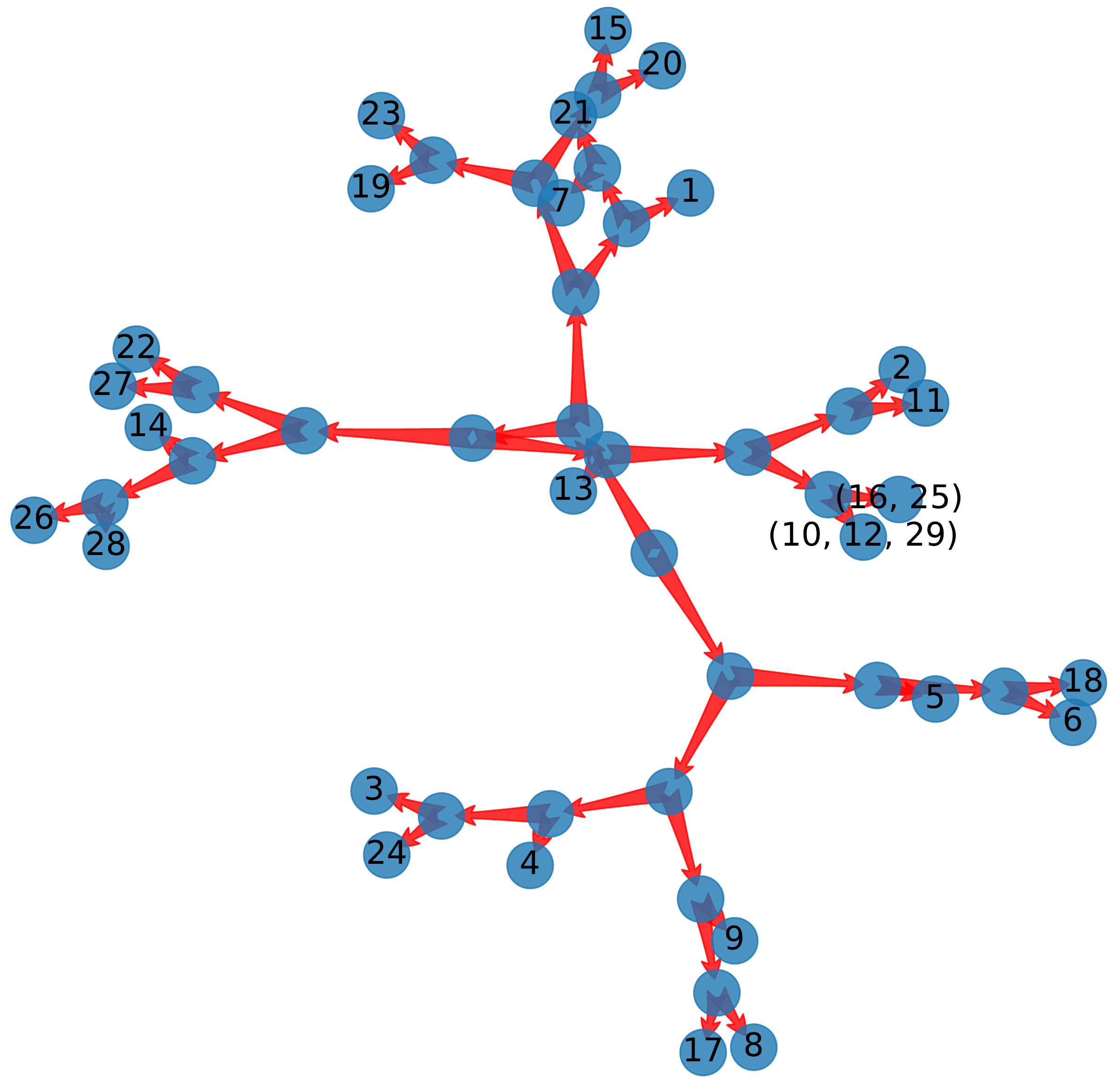} }}
\hfill
\subfloat[BMSL]{{\includegraphics[width=0.23\textwidth]{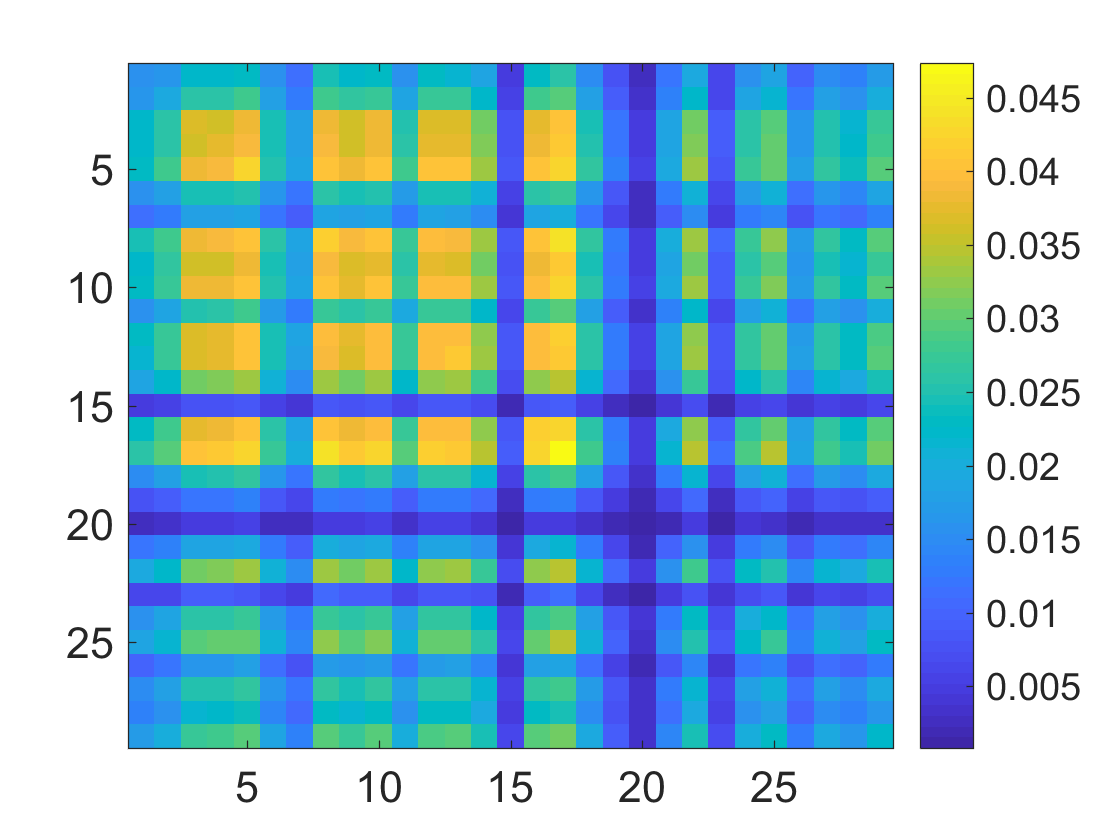} }}
\hfill
\subfloat[MMSL]{{\includegraphics[width=0.23\textwidth]{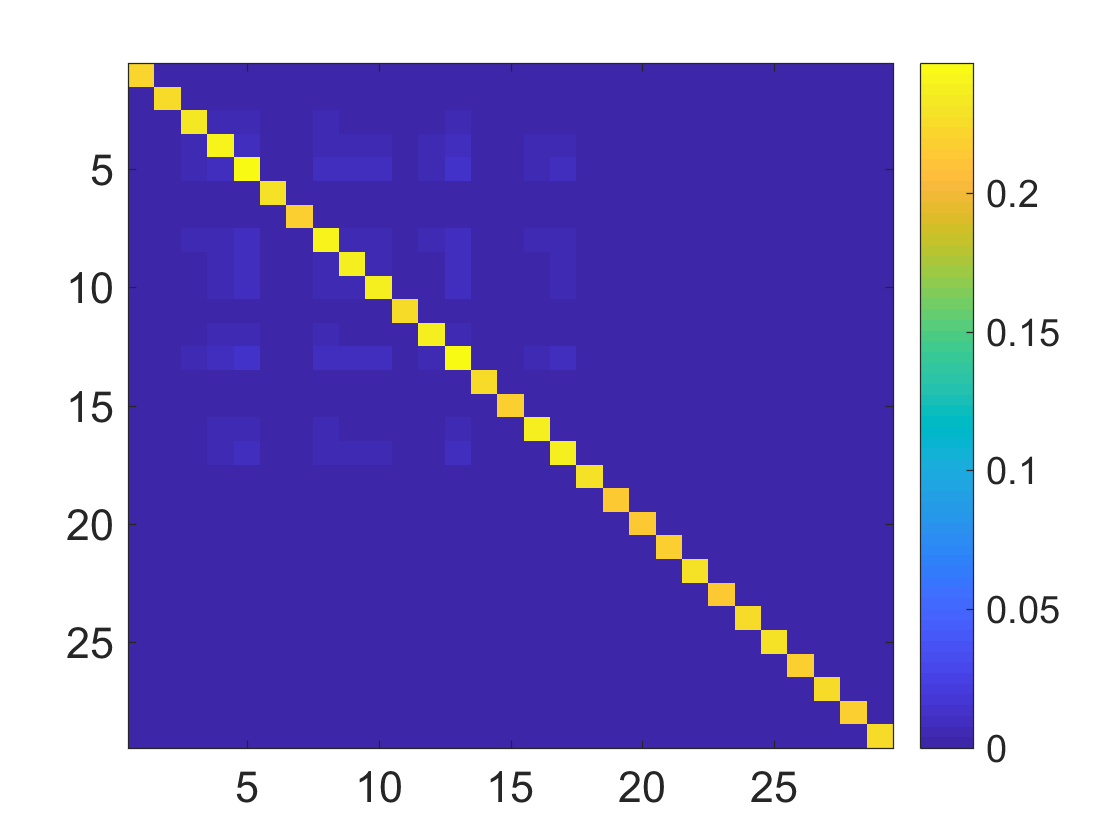} }}
\hfill
\subfloat[MTRL]{{\includegraphics[width=0.23\textwidth]{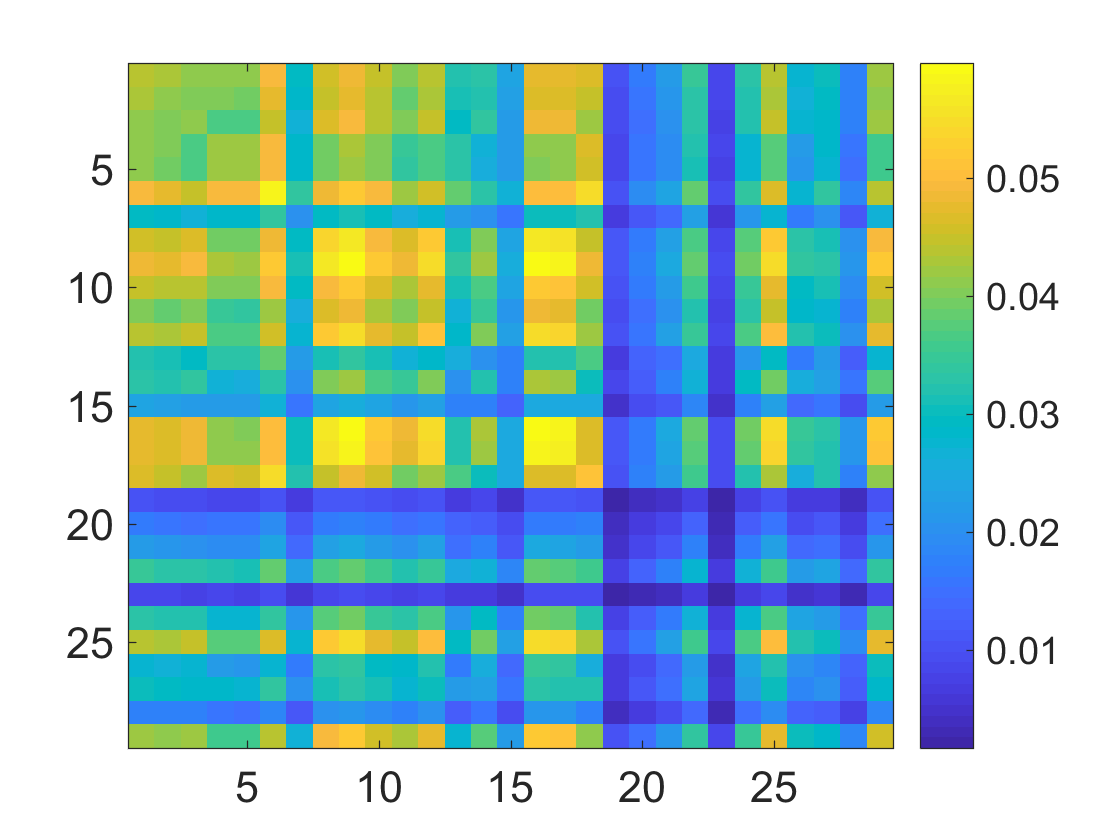} }}
\caption{The task structure on parking occupancy prediction in Birmingham learned by (a) GAMTL; (b) RBF-GAMTL; (c) CCMTL; (d) TAT; (e) BMSL; (f) MMSL; and (g) MTRL. Graph coordinates are generated by MDS over a dissimilarity matrix evaluated with cDTW. Nodes $8$, $9$, $17$ are marked with green.}
\label{fig:Birmingham_graph}
\end{figure*}

\subsection{Dynamical system identification over networks}
Our final application involves system identification, where the objective is to set up a suitable parameterized identification model and adjust the parameters of the model to optimize a performance function based on the error between the desired signal of the system and the identification model outputs~\cite{kumpati1990identification}. Here, we consider system identification in a distributed environment, such as wireless sensor networks (WSN). In this scenario, each agent receives measurements in a streaming fashion, and they are required to estimate either a common (usually nonlinear) model or different individual models due to spatial dependencies by alternating local computations and communications with their neighbors~\cite[Chapter~10]{comminiello2018adaptive}.

Following the experimental setup in recent literature on adaptive filtering (e.g.,~\cite{chen2014multitask,bouboulis2017online}), we evaluate our methodology on a simulation platform. Specifically, the input signal at each node $k$ and time instant $i$ was a sequence of statistically independently $2$d vector defined as:
\begin{equation}
\mathbf{x}_{k,i} = [x_{k,i}(1),x_{k,i}(2)]^T,
\end{equation}
with correlated samples satisfying $x_{k,i}(1) = 0.5x_{k,i}(2) + v_{k,i}$. The second entry of $\mathbf{x}_{k,i}$ and $v_{k,i}$ were both $i.i.d.$ zero-mean Gaussian samples with variance $\sigma^2_{x,k}$ and $(1-\rho^2)\sigma^2_{x,k}$ ($\rho=0.5$ in this work), respectively.

\begin{table*}
\centering
\caption{RMSE on $\tt{Birmingham}$. The best two performances are marked in bold and underlined, respectively.}\label{Tab:time_series}
\begin{tabular}{cccccccc}
\toprule
 & MTRL & MSSL & BMSL & TAT & CCMTL & \textbf{GAMTL} & \textbf{RBF-GAMTL} \\
\midrule
$\tt{Birmingham}$ ($2$h embedding) & $0.0976$ & $0.0950$ & $0.0860$ & $0.0923$ & $0.0857$ & $\underline{0.0853}$ & $\mathbf{0.0730}$ \\
$\tt{Birmingham}$ ($4$h embedding) & $0.0970$ & $0.1120$ & $0.0880$ & $0.0903$ & $0.0842$ & $\underline{0.0838}$ & $\mathbf{0.0775}$ \\
\bottomrule
\end{tabular}
\end{table*}

The nonlinear system to be identified was the Wiener model:
\begin{equation}
  \psi(y_{k,i})=\begin{cases}
    \frac{y_{k,i}}{3[0.1+0.9y^2_{k,i}]^{1/2}} & \text{for $y_{k,i}\geq 0$}\\
    \frac{-y^2_{k,i}[1-\exp(0.7y_{k,i})]}{3} & \text{for $y_{k,i}<0$},
  \end{cases}
\end{equation}
where $y_{k,i} = \mathbf{w}^T\mathbf{x}_{k,i} - 0.2y_{k,i-1} + 0.35y_{k,i-2}$ is a embedded linear system with $2$nd-order memory and $\mathbf{w}$ is a $2$d linear coefficient vector on input $\mathbf{x}_{k,i}$. Additionally, we assume the system output $\psi(y)$ is corrupted by additive zero-mean Gaussian $i.i.d.$ noise $z$ with variance $\sigma^2_{z,k}$, i.e., $d_{k,i} = \psi(y_{k,i}) + z_{k,i}$.

We consider a network consisting of $10$ nodes with the topology depicted in Fig.~\ref{fig:system_graph}(a). $\sigma^2_x$ is uniformly distributed between $[0.005 0.015]$, whereas $\sigma^2_z$ is uniformly distributed between $[0.0005 0.0015]$.
The nodes were divided into $4$ clusters: $\mathcal{C}_1 = \{1,2,3\}$, $\mathcal{C}_2 = \{4,5,6\}$, $\mathcal{C}_3 = \{7,8\}$ and $\mathcal{C}_4 = \{9,10\}$. The $2$d linear coefficient vector $\mathbf{w}$ of the form $\mathbf{w}_{\mathcal{C}_i} = \mathbf{w}_0 + \delta \mathbf{w}_{\mathcal{C}_i}$ were chosen as $\mathbf{w}_0 = [0.5,-0.4]^T$, $\delta\mathbf{w}_{\mathcal{C}_1} = [0.2,-0.1]^T$, $\delta\mathbf{w}_{\mathcal{C}_2} = [0.2,0.1]^T$, $\delta\mathbf{w}_{\mathcal{C}_3} = [-0.3,0.1]^T$, and $\delta\mathbf{w}_{\mathcal{C}_4} = [0,0.1]^T$. We concatenate the coefficient vector in each agent to from a coefficient matrix $W\in\mathbb{R}^{2\times 10}$. The final coefficient matrix $W^\star$ (by taking into account inter-cluster communications) is given by $W^\star = WA$, where the mixing matrix $A$ was chosen according to the Metropolis rule:
\begin{equation}
  A_{k,l}=\begin{cases}
    \frac{1}{\max\{|\mathcal{N}_k|,|\mathcal{N}_l|\}} & \text{if $l\in\mathcal{N}_k$ and $l\neq k$}\\
    1-\sum_{i\in\mathcal{N}_k\setminus k}A_{k,i} & \text{$l=k$} \\
    0 & \text{otherwise}.
  \end{cases}
\end{equation}

The qualitative and quantitative evaluations are summarized in Figs.~(\ref{fig:system_graph}) and (\ref{fig:System_info}), respectively. For this complex nonlinear data set, it seems that only RBF-GAMTL can capture the underlying task relatedness and made correct predictions on system output. For TAT, it correctly discovered the closeness between agents $7$ and $8$, and between agents $9$ and $10$. However, it completely confused the relationship between agents $4$, $5$ and $6$. Moreover, it is obvious that linear models cannot model highly nonlinear mappings.

\begin{figure*}[!htbp]
\setlength{\abovecaptionskip}{0pt}
\setlength{\belowcaptionskip}{-10pt}
\centering
\subfloat[Network topology]{{\includegraphics[width=0.18\textwidth]{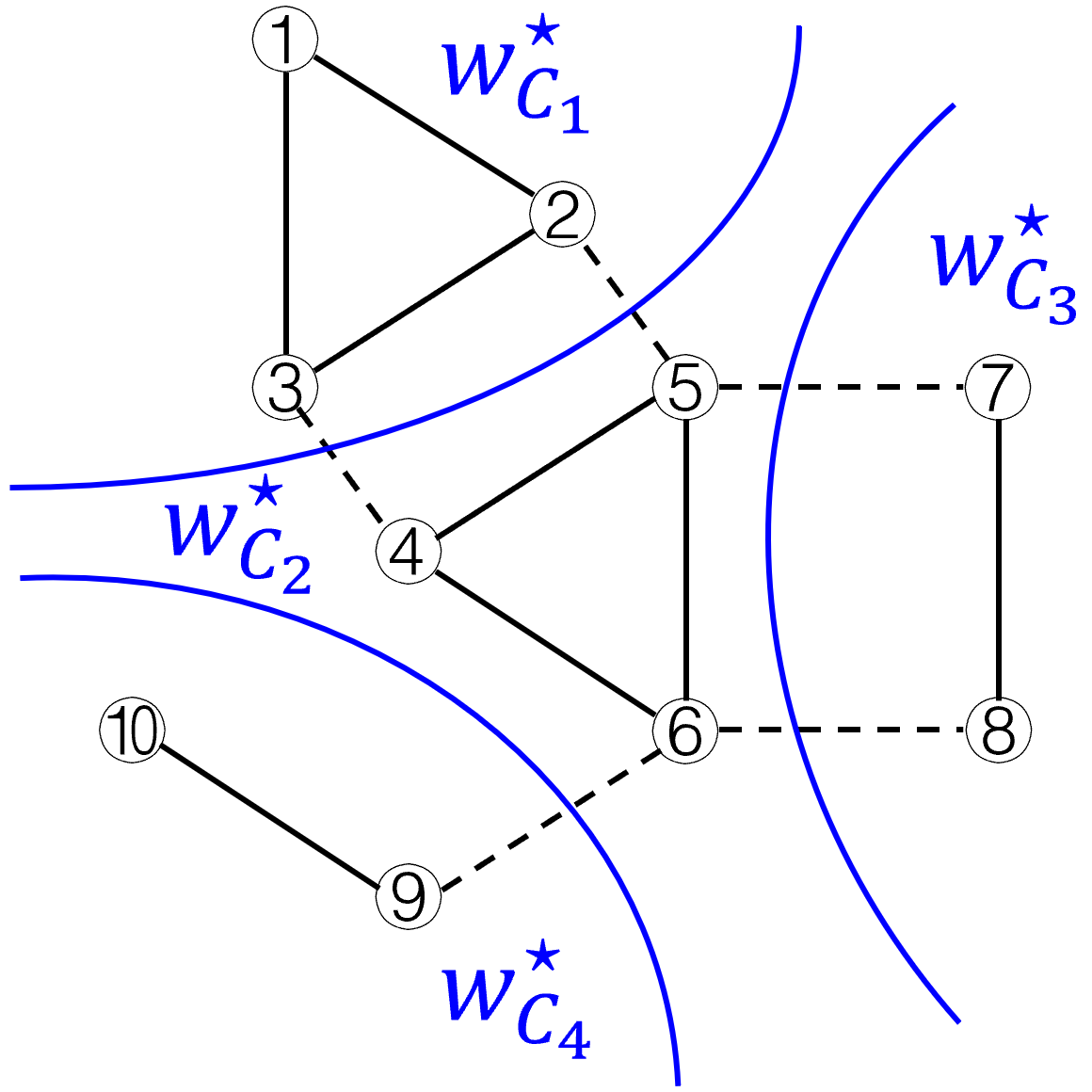}}}%
\hfill
\subfloat[GAMTL]{{\includegraphics[width=0.23\textwidth]{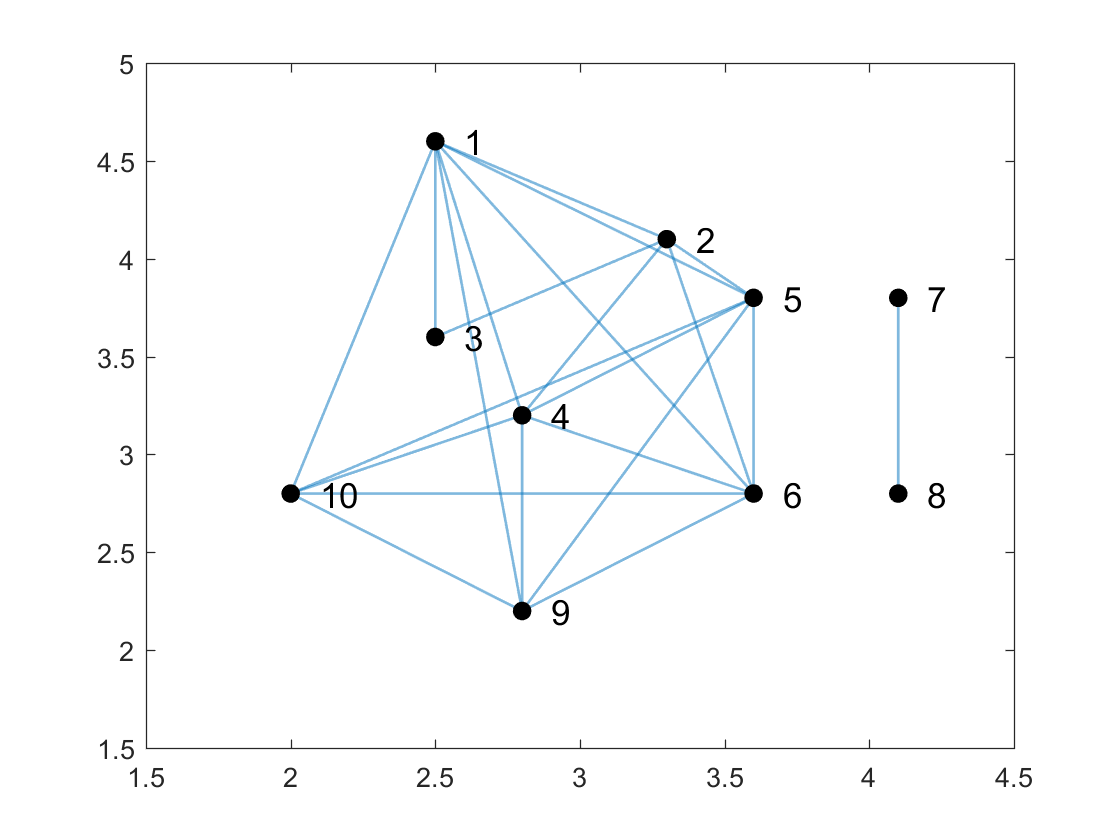}}}%
\hfill
\subfloat[RBF-GAMTL]{{\includegraphics[width=0.23\textwidth]{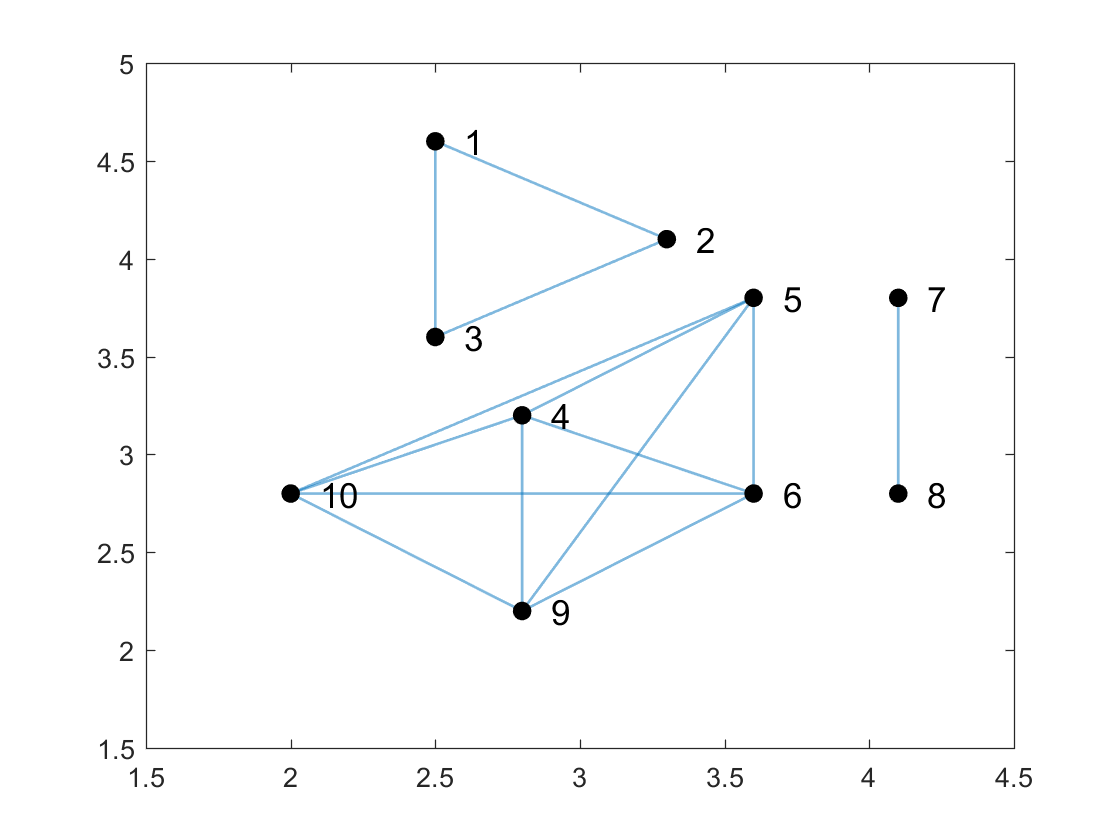}}}
\hfill
\subfloat[CCMTL]{{\includegraphics[width=0.23\textwidth]{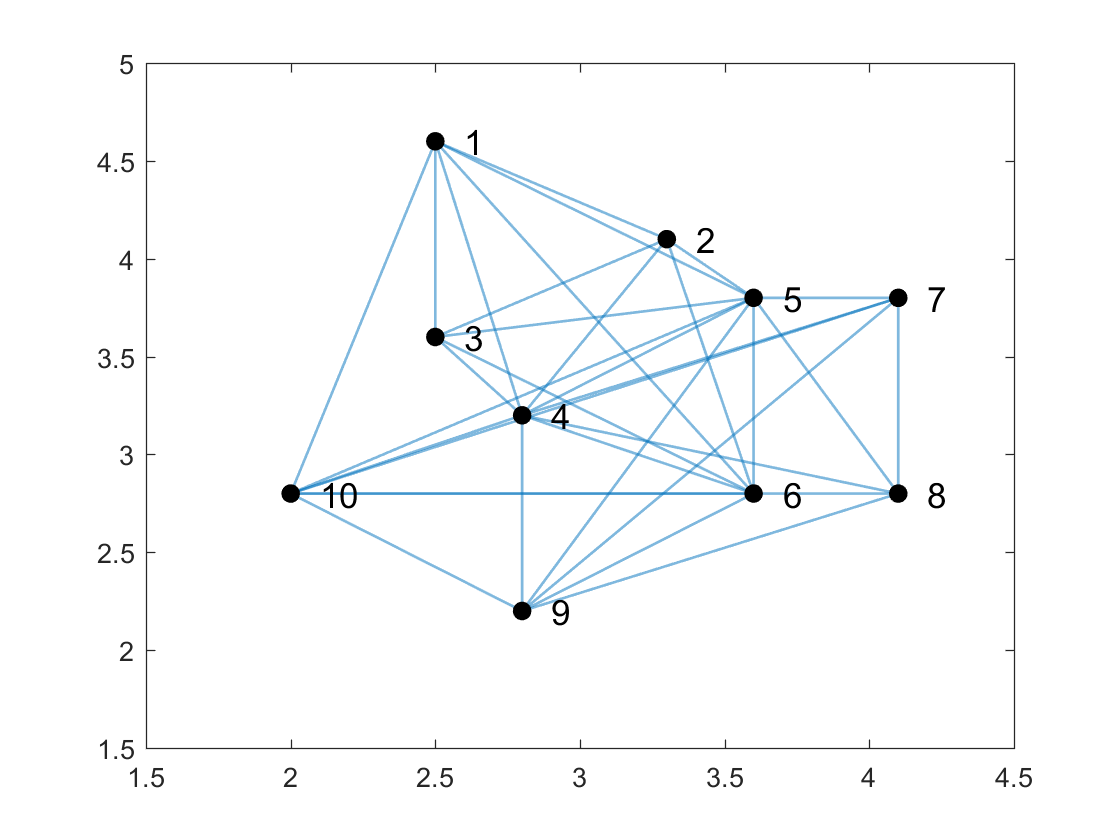} }} \\
\subfloat[TAT]{{\includegraphics[width=0.18\textwidth]{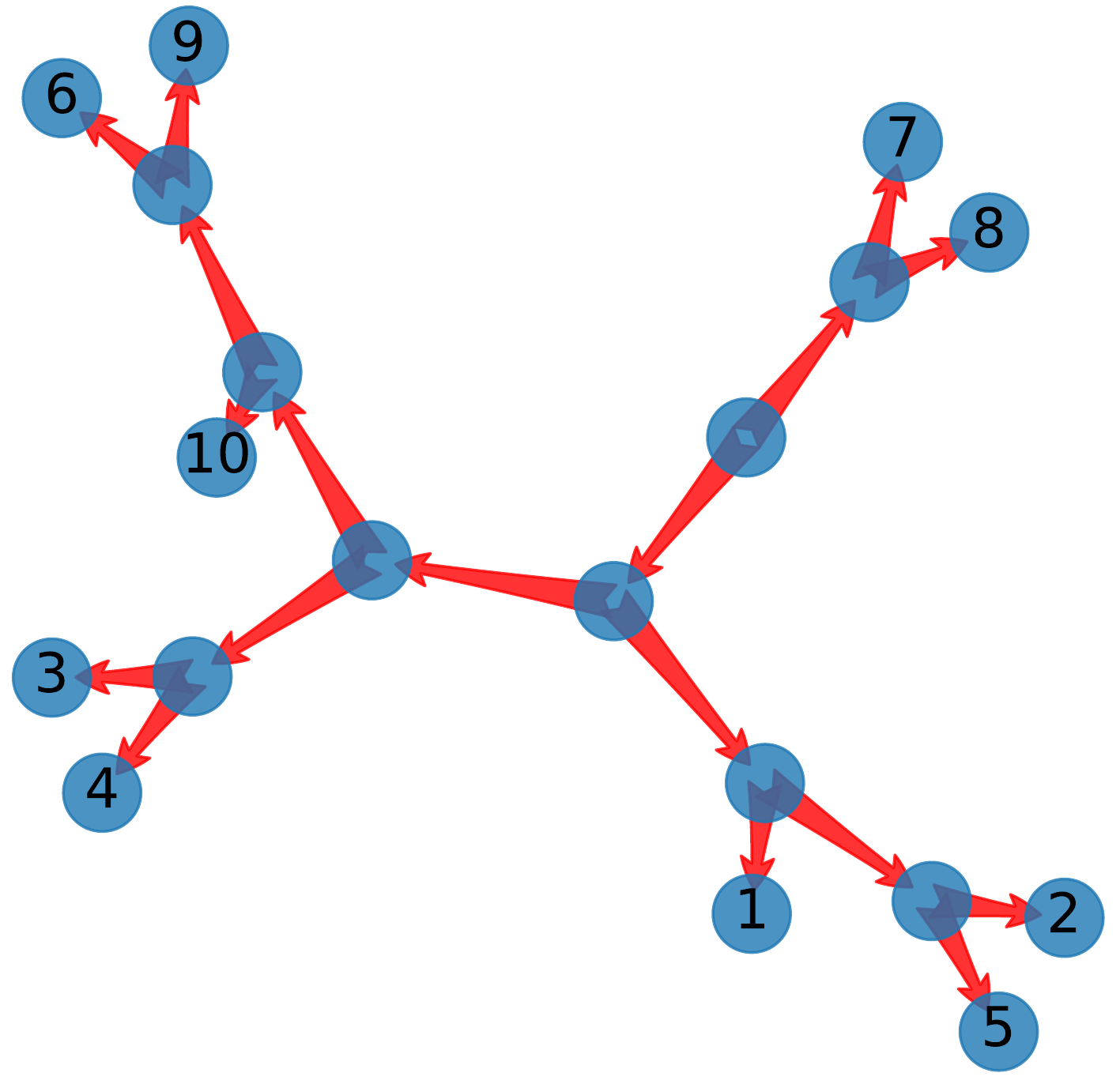}}}
\hfill
\subfloat[BMSL]{{\includegraphics[width=0.23\textwidth]{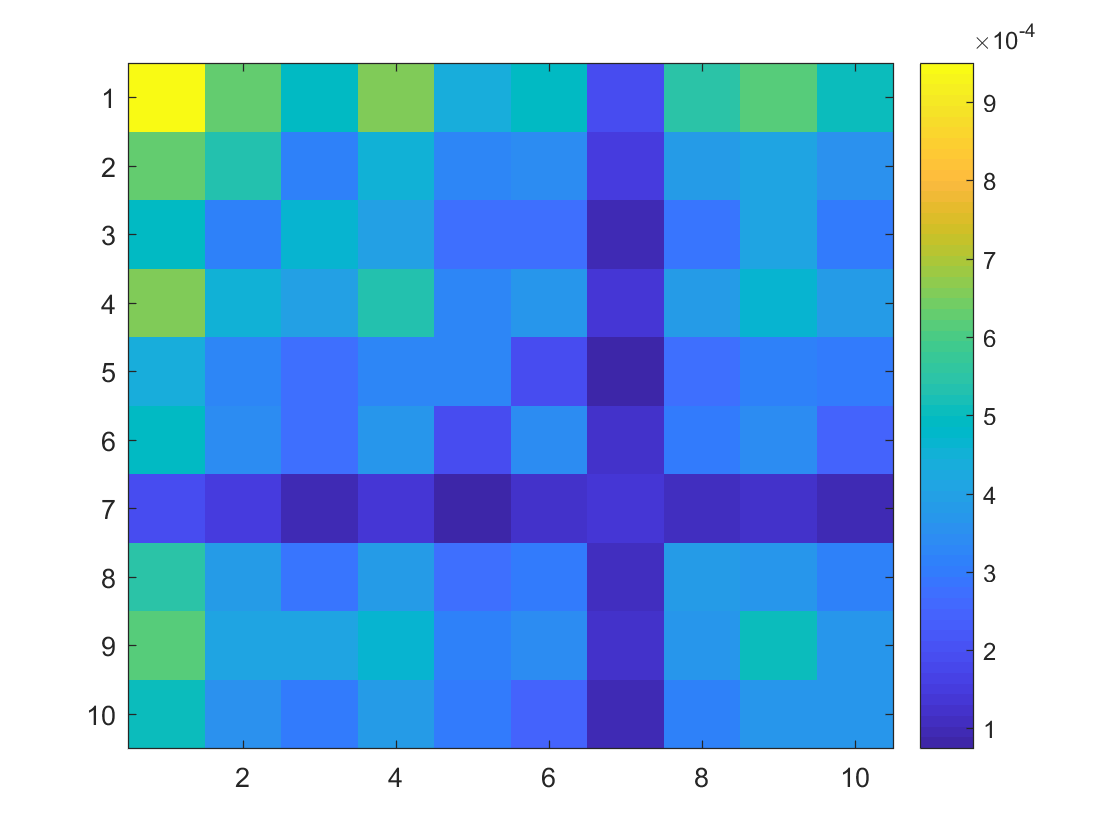} }}%
\hfill
\subfloat[MMSL]{{\includegraphics[width=0.23\textwidth]{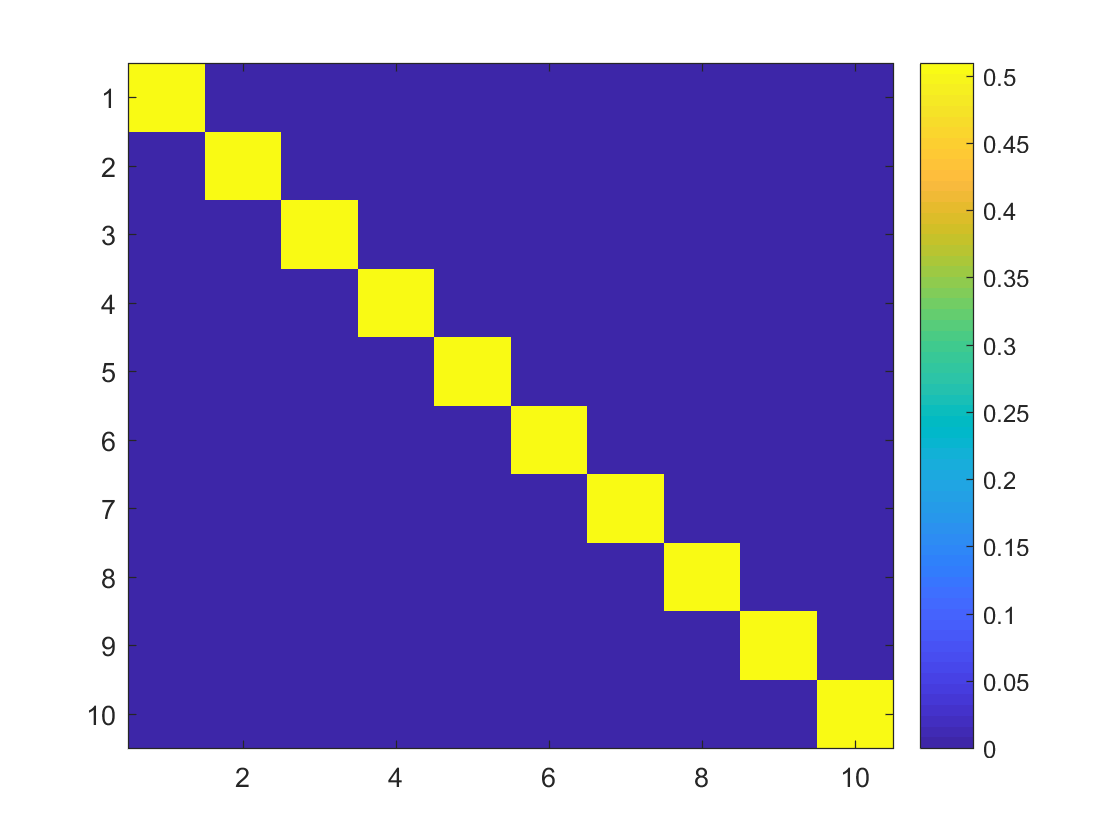} }}%
\hfill
\subfloat[MTRL]{{\includegraphics[width=0.23\textwidth]{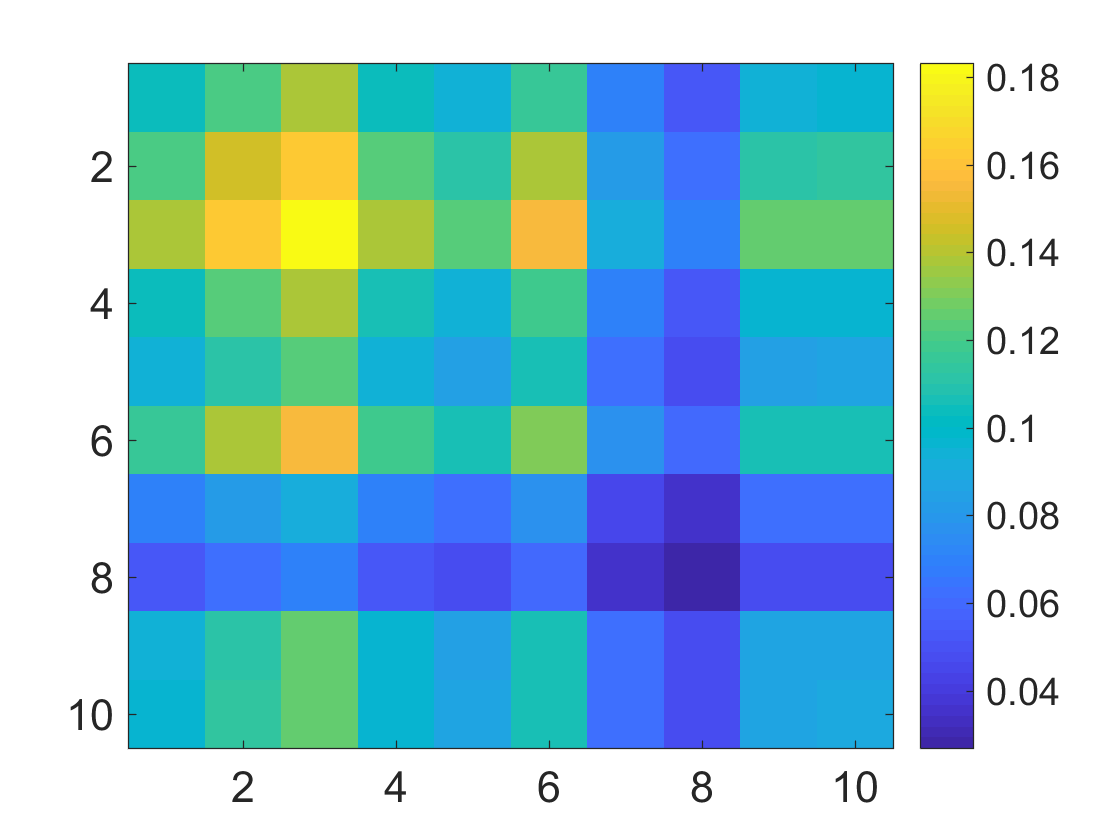} }}%
\caption{(a) networks topology, with $10$ nodes divided into $4$ different clusters (solid lines indicate strong intra-cluster connections, whereas dashed lines indicate weak inter-cluster connections); (b) to (h) demonstrate the task structures on dynamical system data set learned by GAMTL; RBF-GAMTL; CCMTL; TAT; BMSL; MMSL and MTRL, respectively. Graph coordinates are generated by MDS over a dissimilarity matrix evaluated with (symmetric) conditional KL divergence.}
\label{fig:system_graph}
\end{figure*}

\begin{figure}
\setlength{\abovecaptionskip}{0pt}
\setlength{\belowcaptionskip}{-10pt}
\centering
\subfloat[]{{\includegraphics[width=0.45\textwidth]{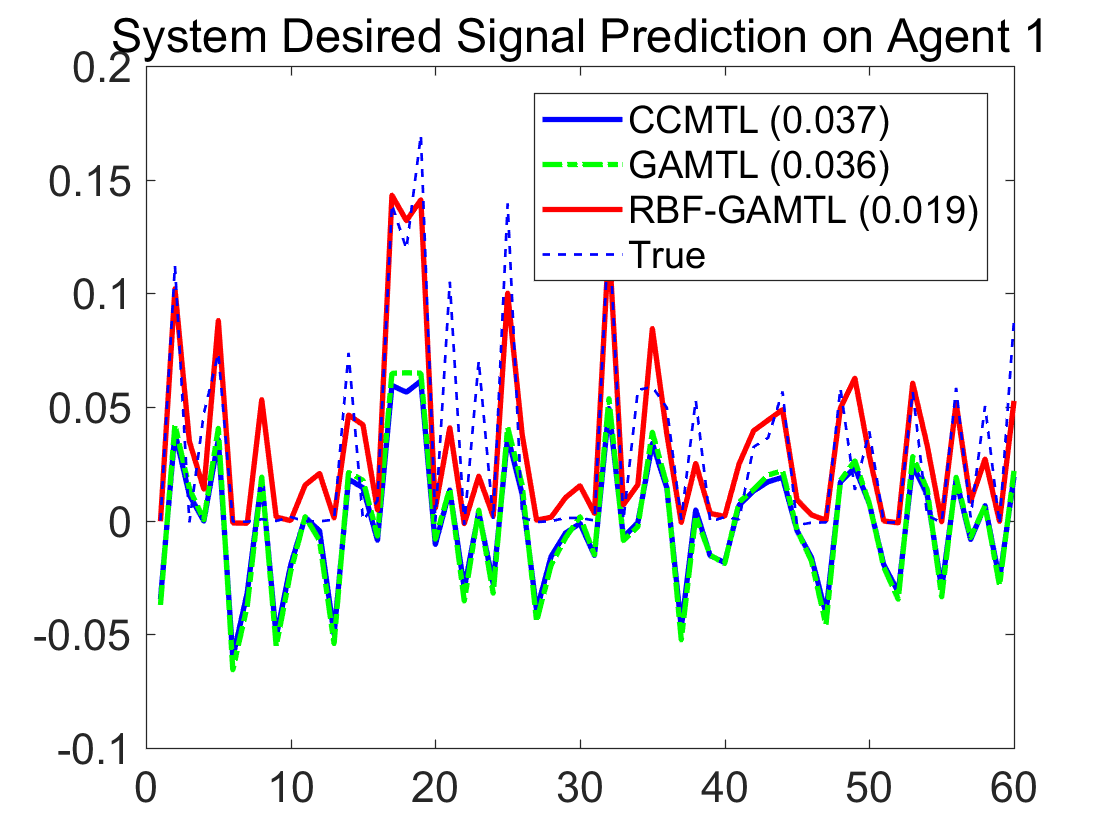} }} \\
\subfloat[]{{\includegraphics[width=0.45\textwidth]{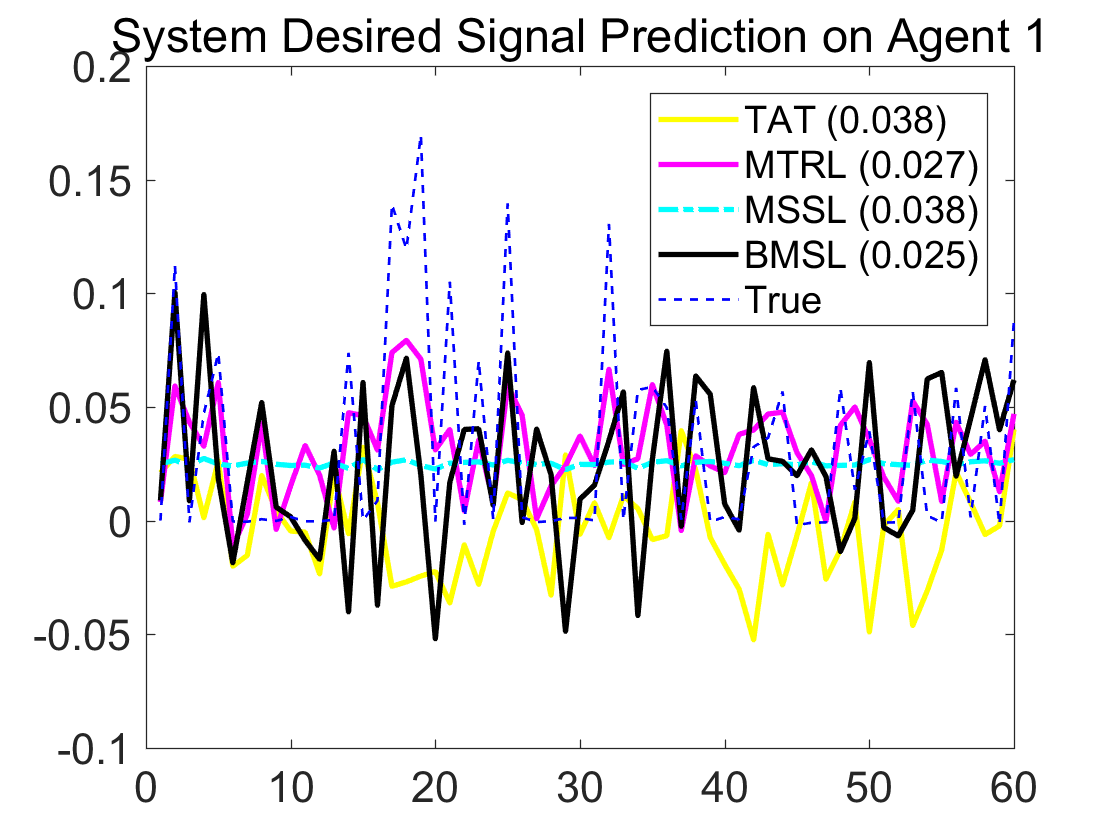} }}
\caption{Prediction results in agent $1$ of each methodology (the value in the bracket indicates RMSE over $10$ agents). Only RBF-GAMTL can track the highly nonlinear dynamics.}
\label{fig:System_info}
\end{figure}

\section{Conclusions and Future Work}


We presented a novel framework for multi-task learning that is able to unveil an easily understandable graph over tasks. The nature of interpretability in this work differs from most existing interpretable machine learning approaches that provide feature level interpretability on revealing how much each feature contributes to the regression/classification result. Our framework provides ``relational interpretability" that exposes how each of the individual task contributes to the performance of a specific task. Besides bringing benefits on interpretability, extensive experiments suggest that our framework is able to reduce the generalization error as well. Finally, to underscore the improved interpretability, we establish the connections between our learned graph and the structure recovered from different machine learning perspectives including information-theoretic learning or time series analysis.


In the future, we will extend the current framework to incorporate feature-level interpretability. We will also consider the joint learning of multiple tasks and other interpretable data structures with more complex intra-group relations.



\bibliographystyle{IEEEtran}
\bibliography{GAMTL}

\end{document}